\newcommand{\bfI}{{\bf I}}
\newcommand{\bfW}{{\bf W}}
\newcommand{\bfY}{{\bf Y}}
\newcommand{\bfZ}{{\bf Z}}
\newcommand{\bfb}{{\bf b}}
\newcommand{\bfh}{{\bf h}}
\newcommand{\bfx}{{\bf x}}
\newcommand{\bfy}{ {\bf y}}
\newcommand{\bfu}{{\bf u}}
\newcommand{\bfv}{{\bf v}}
\newcommand{\bfw}{{\bf w}}
\newcommand{\bfz}{{\bf z}}
\newcommand{\bfalpha}{{\boldsymbol \alpha}}
\newcommand{\bfphi}{{\boldsymbol \Phi}}
\newcommand{\bftheta}{{\boldsymbol \theta}}
\newcommand{\ubold}{\fontseries{b}\selectfont}
\newtheorem{proposition}{Proposition}
\DeclareRobustCommand{\cev}[1]{%
  {\mathpalette\do@cev{#1}}%
}
\newcommand{\do@cev}[2]{%
  \vbox{\offinterlineskip
    \sbox\z@{$\m@th#1 x$}%
    \ialign{##\cr
      \hidewidth\reflectbox{$\m@th#1\vec{}\mkern4mu$}\hidewidth\cr
      \noalign{\kern-\ht\z@}
      $\m@th#1#2$\cr
    }%
  }%
}
\newcommand{\flow}{{\cev{g}}}
\newcommand{\Flow}{{\cev{G}}}
\newcommand{\MQF}{MQF$^2$}
\newcommand{\Expect}{\mathbb{E}}
\newcommand\blfootnote[1]{%
  \begingroup
  \renewcommand\thefootnote{}\footnote{#1}%
  \addtocounter{footnote}{-1}%
  \endgroup
}
\begin{document}

\runningauthor{Kan, Aubet, Januschowski, Park, Benidis, Ruthotto, Gasthaus}

\twocolumn[

\aistatstitle{Multivariate Quantile Function Forecaster}

\aistatsauthor{Kelvin Kan$^{1,\dagger}$
\And Fran\c{c}ois-Xavier Aubet$^2$
\And Tim Januschowski$^{3,\dagger}$
}
\aistatsauthor{Youngsuk Park$^2$
\And Konstantinos Benidis$^2$ 
\And Lars Ruthotto$^1$
}
\aistatsauthor{Jan Gasthaus$^2$}
\aistatsaddress{$^1$Emory University 
\And $^2$Amazon Research
\And $^3$Zalando SE}
]

\begin{abstract}
We propose Multivariate Quantile Function Forecaster (\MQF), a global probabilistic forecasting method constructed using a multivariate quantile function and investigate its application to multi-horizon forecasting.
Prior approaches are either autoregressive, implicitly capturing the dependency structure across time but exhibiting error accumulation with increasing forecast horizons, or multi-horizon sequence-to-sequence models, which do not exhibit error accumulation, but also do typically not model the dependency structure across time steps. \MQF\ combines the benefits of both approaches, by directly making predictions in the form of a multivariate quantile function, defined as the gradient of a convex function which we parametrize using input-convex neural networks. By design, the quantile function is monotone with respect to the input quantile levels and hence avoids quantile crossing. We provide two options to train \MQF: with energy score or with maximum likelihood. Experimental results on real-world and synthetic datasets show that our model has comparable performance with state-of-the-art methods in terms of single time step metrics while capturing the time dependency structure.
\end{abstract}

\section{INTRODUCTION}
\label{sec:intro}
Among the many applications of time series forecasting (see e.g.,~\citet{petropoulos2021forecasting} for an overview), inventory management in supply chain contexts has a prominent place. For this use-case in particular, \emph{probabilistic} forecasts provide the input to downstream decision making problems such as replenishment decisions. For example, variations of the classic newsvendor problem show a direct correspondence between different quantile levels of a probabilistic forecast distribution with safety stocks in inventory management. It is therefore no surprise that recently proposed probabilistic forecasting methods have considered quantiles or the quantile function to represent probabilistic forecasts in the univariate case~\citep{gasthaus2019,wen2017,park2021,gouttes2021}. \blfootnote{\hspace{-.2cm}$^\dagger$Work done while at Amazon Research.}

In this present work, we extend and generalize existing work by considering multivariate quantile functions. Existing quantile-based methods make predictions in the form of univariate quantiles (or quantile functions), i.e.,\ independently for multiple time points in a multi-horizon setting, or independently across items in a multivariate forecasting setting (or both), and thereby ignore existing dependency structures in their forecasts.  The extension to multivariate quantile functions allows us to capture these dependencies, which can have a significant impact on the accuracy of downstream systems (e.g.,\ automatic inventory management) by capturing effects such as cannibalization, cross-selling or substitutability of products~\citep{zhang2014,RAJARAM2001582,hanasusantoKWZ15}.
Despite their benefits, we note that multivariate quantile functions have not been studied or applied extensively in the literature on forecasting or machine learning (although there is a growing body of work from statistics and econometrics). We speculate that this may be due to the fact that the generalization of quantile functions from univariate to multivariate is not unique and exploring the properties of different notions of multivariate quantiles is still an active area of research. In contrast, multivariate probabilistic forecasts represented as multivariate probability densities have recently received  more attention~\citep{rasul2020,salinas2019,de2020normalizing,rasul2021autoregressive}. 

In the univariate case, the quantile function of a random variable is (loosely speaking) the inverse of the cumulative distribution function (CDF). In the multivariate case, however, the corresponding notion of a \emph{multivariate quantile function} is not uniquely defined, and in fact several proposals have been presented (see \citet[Sec 2.4]{carlier2016}), emphasizing different attributes of univariate quantile functions. Here we adopt the definition of \citet{carlier2016} (studied further in \citet{chernozhukov2017, hallin2021}) as maps that (i) map a reference distribution (e.g.,\ uniform on the unit cube) to the target distribution, and (ii) are monotonic, where the particular notion of multivariate monotonicity used is that of being the gradient of a convex function. Through Brenier's theorem~\citep{brenier1991} and Knott-Smith optimality criterion~\citep{knott1984}, this definition characterizes multivariate quantile functions as the unique solutions to optimal transport problems with quadratic costs. Further, property (i) immediately connects this notion to normalizing flows (where typically the inverse direction is parametrized), and indeed normalizing flows inspired by this particular form of monotonicity has recently been proposed \citep{onken2021ot,huang2020}.

In more details, our contributions are as follows:
\begin{itemize}
    \item Building on the notion of multivariate quantile functions as gradients of convex function put forth in \citet{carlier2016}, we propose to parametrize multivariate quantile functions via the gradients of \emph{input convex neural networks}~\citep{amos2017}. 
    \item We propose a training procedure based on the \emph{energy score}~\citep{gneiting2007}, a generalization of the continuous ranked probability score~\citep{matheson76} to the multivariate case, and empirically demonstrate that this is effective and robust. Our model can alternatively be trained using a more standard maximum likelihood estimation approach, by relating it to normalizing flows (in particular the convex potential flows proposed in~\citet{huang2020}).
    \item We combine the multivariate quantile function model with an RNN-based feature extractor, resulting in a forecasting method that yields accurate joint multi-step forecasts. To the best of our knowledge, we are the first to represent multivariate forecasts using multivariate quantile functions.
\end{itemize}

In our empirical evaluations we show the practical viability of our approach in a series of experiments on both real-world and synthetic data where we employ the multivariate quantile function to model the multi-step forecast distribution.  Our approach avoids pitfalls like error accumulation~\citep{salinas2020} and quantile crossing~\citep{wen2017} while allowing for realistic samples from the probabilistic forecast. The latter is particularly important in applications that require human interaction, e.g., in a supply chain context, where business analysts want to consider extreme scenarios to sharpen their intuition about the future.

The rest of the paper is organized as follows. In Section \ref{sec:background} we review the building blocks of our methodology: multivariate quantile functions and various training objectives. In Section \ref{sec:MQF2} we present our model and describe the training and inference procedures. In Section \ref{sec:experiments} we provide an empirical evaluation on several real-world datasets and conclude the paper in Section \ref{sec:conclusion}.  We start by reviewing the state of the art. 

\section{RELATED WORK}
\label{sec:related}
Deep learning-based approaches to probabilistic time series forecasting have been widely studied (see \citet{benidis2020} and references therein). In addition to models utilizing parametric distributions (e.g.,\ DeepAR \citep{salinas2019}), approaches based on quantile regression \citep{koenker1978, koenker2005} combined with RNN/CNN \citep{wen2017} or Transformer-based \citep{li2019, lim2021} feature extractors have been shown to be flexible and effective. However, these approaches are limited to univariate predictions at pre-specified quantile levels and suffer from the quantile crossing problem. Recent work on modeling univariate quantile functions \citep{gasthaus2019, park2021} has addressed these limitations while still focusing on the univariate case. Our approach extends this work to multivariate quantile functions, and similarly does not suffer from quantile crossing or require quantile level pre-specification.

The idea of using (univariate) quantile levels as input to a neural network in order to define a flexible quantile function model has previously been explored. \citet{dabney2018} proposed implicit quantile networks which are trained by minimizing quantile loss using random uniform samples as input in the context of distributional reinforcement learning to model the state-action return distribution, and \citet{gouttes2021} employed the same approach in the context of time series forecasting. A similar approach using uniform samples as input and minimizing the corresponding quantile loss has been proposed in \citet{tagasovska2019} as a generic mechanism for modeling aleatoric uncertainty. However, none of these approaches explicitly enforce the monotonicity constraint on the quantile function, nor consider multivariate quantiles.

Other multivariate notions of quantile functions than the one we use here have been proposed, e.g.,\ through univariate conditional quantile functions (requiring the choice of an ordering) \citep{wei2008}, or as gradients of convex potentials but without requiring them to transport from a reference distribution to the target distribution \citep{koltchinskii1997}. The notion we make use of here allows us to easily obtain samples (by being an optimal transport) while not requiring the choice of a particular ordering of the dimensions.

Conceptually closest to our approach, although not proposed in the setting of time series forecasting, is the work on convex potential flows \citep{huang2020}. As in our work, and also inspired by the connection to optimal transport through Brenier's theorem, the authors propose to define an invertible model as the gradient of a convex function and demonstrate how the inverse as well as the Jacobian determinant required for likelihood-based learning can be computed efficiently. They also propose to use input-convex neural networks to model the underlying convex function. Their work does not, however, consider the invertible mapping as a multivariate quantile function and---like other work on normalizing flows trained using maximum likelihood---uses a parametrization in the ``reverse'' direction (from the target to the Gaussian reference distribution). The same idea of using the gradient of an input-convex neural network model to define functions that are solutions to optimal transport problems under quadratic cost has also been proposed in \citet{bunne2021}, albeit embedded in a larger architecture for modeling population dynamics.

More broadly, density models defined through invertible maps (``normalizing flows'') \citep{kobyzev2021} have been used in the context of time series forecasting as flexible uni- and multivariate density models: \citet{rasul2020} proposed to directly parametrize a multivariate forecast distribution using a normalizing flow, while \citet{de2020normalizing} combined a linear-Gaussian dynamical system with an invertible output model. These approaches, however, treat the flows as generic density estimators and do not draw the connection to multivariate quantile functions.

\section{BACKGROUND}
\label{sec:background}
In this section we introduce the necessary background and building blocks of our approach: multivariate quantile functions, the energy score (which forms the basis of our training procedure), normalizing flows (which underlie the alternative maximum likelihood training procedure), and (partially) input convex neural networks (which we use to parametrize the convex function).

\subsection{Multivariate Quantile Functions}
\label{subsection:Multivariate Quantile Functions}
In the univariate case, for a real-valued random variable $Z$, denote by $F_Z(z)$ its CDF. The corresponding quantile function is defined as
\begin{equation}
    q_Z(\alpha) := F_Z^{-1}(\alpha) = \inf \{ z\in \mathbb{R} : \alpha \leq F_Z(z) \}.
\label{eq:univariate_quantile}
\end{equation}
Here $\alpha \in (0,1)$ is the quantile level, which is the probability that $Z$ is less than $q_Z(\alpha)$. 

While the CDF naturally extends to the multivariate case ($F_{\bfZ}: \mathbb{R}^d \rightarrow (0, 1)$), it is not invertible in general, precluding us from defining the corresponding quantile functions as its inverse. One natural way to define a quantile function of an $n$-variate random variable $\bfZ$ is as a mapping from $(0,1)^n$ to $\mathbb{R}^n$. The input to this mapping is a quantile vector $\bfalpha \in (0,1)^n$ (instead of a single quantile level $\alpha$), where the $i$-th entry $\alpha_i$ represents the quantile level of $[q_Z(\bfalpha)]_i$. However, such a mapping is not uniquely defined, as the entries of the quantile vector can interact with each other, so that the quantile levels do not have the same probabilistic meaning~\citep{carlier2016}. The definition proposed in \citet{carlier2016} that we adopt here resolves this ambiguity by enforcing a particular notion of monotonicity.

In the univariate case, by construction, the quantile function has two essential properties. The first property is satisfying the representation property\footnote{In general, quantile vectors can follow distributions other than $U(0,1)^n$~\citep{carlier2016}. For instance, one can use the isotropic Gaussian distribution.}
\begin{equation}
   \bfZ = q_{\bfZ}(\bfalpha), \quad \bfalpha \sim \mathit{U}(0,1)^n,
\label{eq:multivariate_quantile}
\end{equation}
for $n=1$. Here, we slightly abused notation by denoting $\bfalpha$ as a random variable. The second property is monotonicity, i.e.,
\begin{equation}
\text{if } \alpha_1 < \alpha_2\text{, then }q_Z(\alpha_1) \leq q_Z(\alpha_2). 
\label{eq:property_monotonicity}
\end{equation}
The multivariate (vector) quantile function proposed in \citet{carlier2016} is defined as a gradient of a convex function (a multivariate notion of monotonicity, thus extending \eqref{eq:property_monotonicity}) that satisfies the representation property \eqref{eq:multivariate_quantile}. Moreover, the convexity implies 
\begin{equation}
    (q_{\bfZ}(\bfalpha_1)-q_{\bfZ}(\bfalpha_2))^\top (\bfalpha_1-\bfalpha_2) \geq 0,
\label{eq:multivaraite_monotonicity}
\end{equation}
which reduces to the monotonicity in the univariate case. Thus, this definition reduces to the classical quantile function in the univariate case. By defining the quantile function to be monotonic in the sense of being the gradient of a convex function, a connection to work on optimal transport \citep{villani2009optimal, peyre2019} is drawn, where Brenier's polar factorization theorem~\citep{brenier1991} and Knott-Smith optimality criterion~\citep{knott1984} establish that such functions are the unique optimal transports with quadratic cost. In particular, the representation property and monotonicity are necessary and sufficient for multivariate quantile functions to be the unique optimal transports from a reference distribution (typically chosen to be uniform on the unit cube) to the distribution of interest under quadratic cost, i.e.\ they minimize the Wasserstein distance $\min\limits_{q: q(\bfalpha)=\bfZ} \Expect_{\bfalpha \sim U(0,1)^n}\| \bfalpha-q(\bfalpha)\|^2_2$.

\subsection{Energy Score}

In the univariate case, given realizations $z$ of the random variable $Z$, we seek to estimate the quantile function $q_Z$ for all quantile levels $\alpha \in (0,1)$. We can achieve this by minimizing the continuous ranked probability score (CRPS) \citep{gneiting2007}, defined as
\begin{equation}
    L_{\text{CRPS}}(q,z) = \Expect_{w, w' \sim q(\mathit{U}(0,1))} \left[ -\frac{1}{2} |w-w'| + |w-z| \right],
\end{equation}
where $w$ and $w'$ are independent. CRPS is strictly proper~\citep{gneiting2007}, i.e.,
\begin{equation}
    \Expect_{z \sim Z} L_{\text{CRPS}}(q_Z,z) < \Expect_{z \sim Z} L_{\text{CRPS}}(q,z),
\end{equation}
for any $Z$ and $q \neq q_Z$, both with finite first moment. In other words, for realizations $z$ of the random variable $Z$, the unique minimizer of CRPS is the quantile function $q_Z$.

The energy score~\citep{gneiting2007} is an extension of the CRPS to the multivariate setting, which takes a statistical energy perspective from \citet{szekely2003}. It is defined as
\begin{align}
\begin{split}
    L_{\text{ES}}(q,\bfz) &= \Expect_{\bfw, \bfw' \sim q(\mathit{U}(0,1)^n)} \bigg[ -\frac{1}{2}  \|\bfw-\bfw'\|^\beta_2 \\
    & \qquad\qquad\qquad\qquad\quad + \|\bfw-\bfz \|^\beta_2 \bigg],
\end{split}
\label{eq:energy_score}
\end{align}
where $\bfw$ and $\bfw'$ are independent.
If $\beta \in (0,2)$, the energy score is strictly proper~\citep{szekely2003} for any $\bfZ$ and $q \neq q_{\bfZ}$ satisfying $\Expect_{\bfz \sim \bfZ} \| \bfz \|^\beta_2 < \infty$ and $\Expect_{\bfw \sim q(\mathit{U}(0,1)^n)} \| \bfw \|^\beta_2 < \infty$, respectively.

In practice, we approximate the energy score by
\begin{align}
\begin{split}
\tilde{L}_{\text{ES}}(q,\bfz) =  & -\frac{1}{2 |\mathcal{C}||\mathcal{C}'|} \sum_{\substack{\bfw \in \mathcal{C},  \bfw' \in \mathcal{C}'}} \|\bfw-\bfw'\|^\beta_2 \\
& + \frac{1}{|\mathcal{C}''|} \sum_{\bfw'' \in \mathcal{C}''}\|\bfw''-\bfz\|^\beta_2
\end{split}
\label{eq:approx_energy_score}
\end{align}
where $\mathcal{C}$s are sets of finite samples drawn from $q(\mathit{U}(0,1)^n)$. 

\subsection{Normalizing Flows}
Normalizing flows \citep{tabak2013,ruthotto2021} are $C^1$-diffeomorphic and orientation-preserving functions which map from $\mathbb{R}^n$ to $\mathbb{R}^n$. In particular, they transform a random variable of interest $\bfZ$ with density $p_\bfZ$ into $\bfY$ with a simple density $p_\bfY$ which can be easily evaluated, typically an isotropic Gaussian. Note that the mappings of normalizing flows go in the opposite direction to quantile functions, which map samples of a uniform (simple) distribution to $\bfZ$. To distinguish between the two opposite directions, in this paper we denote normalizing flows as $\flow$.

\paragraph{Maximum Likelihood Estimation} Using the change of variables formula, we estimate $p_\bfZ(\bfz)$ as 
\begin{equation}
    p_\bfZ(\bfz) \approx p_\flow (\bfz) = p_\bfY(\flow(\bfz)) \det \left( \frac{\partial \flow(\bfz)}{\partial \bfz} \right),
    \label{eq:change_of_variable}
\end{equation}
where $p_\flow$ is the density induced by $\flow$. We target to find a normalizing flow that approximates the true density $p_\bfZ$ well. 
One method to evaluate the discrepancy between the two densities in \eqref{eq:change_of_variable} is the Kullback-Leibler (KL) divergence defined by \vspace{-10pt}
\begin{align*}
    \text{KL}(p_\bfZ \| p_\flow)& \!= \Expect_{\bfz \sim \bfZ} \left[\log \left(\frac{p_\bfZ(\bfz)}{p_\flow(\bfz)}\right) \right] \\
    & \! = \Expect_{\bfz \sim \bfZ} \log (p_\bfZ(\bfz)) \!- \!\Expect_{\bfz \sim \bfZ} \log (p_\flow(\bfz)).
\end{align*}
The first term is a constant and can be dropped in minimization. Replacing the expectation of the second term by samples $\bfz_i \sim \bfZ$, for $i=1,...,m$, we obtain the negative log-likelihood given by
\begin{align}
    \begin{split}
   & \frac{1}{m}\sum_{i=1}^m \tilde{L}_{\text{ML}}(\flow(\bfz_i)) := \frac{1}{m}\sum_{i=1}^m [-\log(p_\flow(\bfz_i))] \\ 
    & = \frac{1}{m} \sum_{i=1}^m \left[ - p_\bfY(\flow(\bfz_i)) - \log \left[ \det \left( \frac{\partial \flow(\bfz_i)}{\partial \bfz} \right) \right] \right]. 
    \end{split}
    \label{eq:likelihood}
\end{align}
\paragraph{Sample Generation} After training the normalizing flow, we can generate predicted samples of $\bfZ$ by going backward through the flow. That is computing the inverse $\flow^{-1}(\bfy)$, where $\bfy$ is a sample drawn from the reference distribution defined by $p_\bfY$.

\subsection{Partially Input Convex Neural Network}
Input convex neural network (ICNN) \citep{amos2017} is a neural network with special constraints on its architecture such that it is convex with respect to (a part of) its input. ICNN has demonstrated successful applications in various optimal transport and optimal control problems~\citep{bunne2021,chen2018,huang2020,makkuva2020}. Moreover, it has been proved that, under mild assumptions, ICNN and its gradient can universally approximate convex functions \citep{chen2018} and their gradients \citep{huang2020}, respectively. This means that the gradient of ICNN can universally approximate multivariate quantile functions in the marginal case. Moreover, we will demonstrate in our experiments that it can also effectively approximate conditional quantile functions.

We consider in this work a type of the ICNN called the partially input convex neural network (PICNN) \citep{amos2017}, we follow the PICNN architecture used in~\citep{huang2020}. For an input pair $(\bfalpha,\bfh) \in (\mathbb{R}^n \times \mathbb{R}^d)$, the key feature of PICNN is that it is only convex with respect to $\bfalpha$. The $i$-th layer of a $k$-layer PICNN, with $i=1, \dots, k$, is represented as:
\begin{align}
\bfv_{i+1} &= a_i \Big( \bfW^{(\bfv)}_i \left( \bfv_i \circ [\bfW^{(\bfv \bfu)}_i \bfu + \bfb^{(\bfv)}_i]_{+} \right) \\
&  + \bfW^{(\bfalpha)}_i \left( \bfalpha \circ (\bfW^{(\bfalpha \bfu)}_i \bfu + \bfb^{(\bfalpha)}_i)\right) + \bfW^{(\bfu)}_i \bfu + \bfb_i \Big), \nonumber
\\ &\text{with:}~~G_\bftheta(\bfalpha,\bfh) = \bfv_k, \label{eq:final_PICNN} 
\bfu = a (\tilde{\bfW} \bfh + \tilde{\bfb}).
\end{align}
Here, $\bfW$'s and $\bfb$'s are weights and bias of the network, respectively, they are collectively denoted as $\bftheta$, $\circ$ denotes the Hadamard product, and $[\cdot]_{+}$ denotes the ReLU activation function. Moreover, to render the convexity of the network, $a_i$'s are convex and non-decreasing activation functions, and $\bfW^{(\bfv)}_i$'s have non-negative entries.

\begin{figure*}[ht]
    \centering 
    \includegraphics[width=0.85\textwidth]{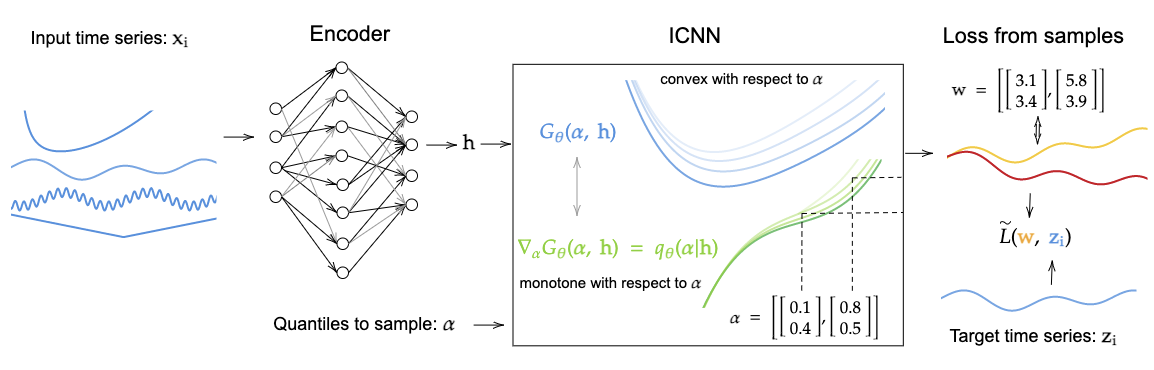}
    \caption{
    Schematic representation of \MQF. The crux of the method is the full multivariate quantile function which is monotone with respect to the multivariate quantile vector $\bfalpha$. It is achieved by modeling the quantile function with the gradient of a function $G_\bftheta$ which is convex with respect to $\bfalpha$.
    The quantile function is conditioned on the past of the time series through a representation obtained from an encoder network.
    The network is trained by minimizing the loss between forecast samples and the true target.
      }
    \label{fig:mqff_fig}
\end{figure*}

\section{MULTIVARIATE QUANTILE FUNCTION FORECASTER}
\label{sec:MQF2}
In this section, we introduce the Multivariate Quantile Function Forecaster (\MQF) which uses a multivariate quantile function conditioned on the past time points to make probabilistic forecasts.
Contemporary deep learning based probabilistic forecasting methods like DeepAR \citep{salinas2020}, MQRNN \citep{wen2017}, or TFT \citep{lim2021} consist of two components: an encoder that extracts features from past observations and compresses them into a finite-dimensional hidden state (\citet{salinas2019} used an RNN-based encoder, \citet{wen2017} used RNNs and CNNs, and \citet{lim2021,eisenach2020mqtransformer} used a Transformer-based architecture), and probabilistic output model which transforms the hidden state into a representation of the probability distribution over future observations (e.g.\ parametric density-based for \citet{salinas2019}, univariate quantile predictions for \citet{wen2017,eisenach2020mqtransformer,lim2021}, normalizing flow-based for \citet{rasul2020}). We follow the same paradigm here and condition the multivariate quantile function (which constitutes the output model) on the hidden state produced by an encoder network. As our focus lies on the output model, we restrict our attention to that component (see e.g.\ \citet{salinas2019,wen2017,benidis2020} for details on the general setup) and only consider the combination with a DeepAR-based encoder (and follow the same window-based training procedure detailed in \citet{salinas2020}), while in principle our approach can be combined with any encoder architecture.
In the remainder of this section, we first present how the PICNN can be used to model a multivariate quantile function. Then, we propose two alternatives to train the model, using the energy score and maximum likelihood, respectively.

\subsection{PICNN Quantile Function}
We propose to use the gradient of a PICNN $g_\bftheta(\bfalpha, \bfh):=\nabla_\bfalpha G_\bftheta(\bfalpha, \bfh)$ to model a conditional multivariate quantile function $q_\bftheta(\bfalpha | \bfh)$ with a quantile vector $\bfalpha \in (0,1)^n$.
We illustrate this in the right half of Figure~\ref{fig:mqff_fig}. The PICNN $G_\bftheta(\bfalpha,\bfh)$ is convex with respect to only $\bfalpha$. Through this setup our multivariate quantile function satisfies the monotonicity property and hence \eqref{eq:property_monotonicity} by design. In addition, the fact that the network is not convex with respect to the second input vector $\bfh$ allows us to flexibly condition the multivariate quantile function on input features or a representation of them produced by a time series encoder model.

As presented in Section \ref{subsection:Multivariate Quantile Functions}, there are two essential properties for a quantile function, the representation property \eqref{eq:multivariate_quantile} and the monotonicity property. Our parametrization through the gradient of the PICNN constrains the multivariate quantile function to fulfill the monotonicity property, which means that we can train our model with standard gradient descent optimizers so as to come as close as possible to the representation property.
To do so, we propose two alternatives: training with the energy score or with maximum likelihood.

While this representation of a multivariate quantile function is general and can be used in any regression context, we propose to use it in the probabilistic forecasting context. We use a forecasting encoder network $H_\bfphi (\bfx) = \bfh$ to obtain a representation of the past time series on which we condition the quantile function. 
To the best of our knowledge, this is the first application of the definition of monotonicity to construct a multivariate quantile function for all quantile levels $\bfalpha \in (0,1)^n$ and the use of ICNN for this application.

\subsection{Training Procedure}

We propose two alternative procedures to train the multivariate quantile forecasting functions described above. First using the energy score to bring the distribution of samples from the model to as close to the true distribution as possible and so fulfill the representation property. The second option is to use normalizing flows as the inverse of the quantile function to map the observed samples to a Gaussian distribution. The network is then trained to maximize the likelihood of the mapped samples under the Gaussian distribution.

\subsubsection{Training via Energy Score}

We propose to train \MQF\ using the energy score~\citep{gneiting2007}, the generalization of CRPS to multivariate distributions.

\paragraph{Training} Consider $m$ training example pairs $\{(\bfx_i, \bfz_i)\}_{i=1}^m$, where $\bfx_i$ denotes the input features and  $\bfz_i$ the target output. Each $\bfz_i$ can span multiple time steps and/or across multiple time series. We minimize the approximated energy score $\tilde{L}_{\text{ES}}$ in
\eqref{eq:approx_energy_score} as
\begin{align*}
&\min_{\bftheta,\bfphi} \frac{1}{m} \sum_{i=1}^{m}\tilde{L}_{\text{ES}}(q_\bftheta(\cdot | H_\bfphi (\bfx_i)),\bfz_i),
\end{align*}
where $q_\bftheta(\cdot | H_\bfphi (\bfx_i))=g_\bftheta( \cdot, H_\bfphi (\bfx_i))$ is the multivariate quantile function. 

\paragraph{Inference} Our multivariate quantile function is trained to provide estimate on all quantile levels, for inference we can compute $\tilde{\bfz} = q_\bftheta(\bfalpha|\bfh)$, where $\bfalpha \in (0,1)^n$ is drawn from a uniform distribution.\footnote{In practice, we use a generalized quantile vector~\citep{carlier2016}, which follows the isotropic Gaussian distribution. Because this empirically allows for a better training.}

\subsubsection{Training via Maximum Likelihood}
The second option is to train the gradient of PICNN through (conditional) normalizing flows. This approach follows \citet{huang2020}, which proposed to use ICNN as normalizing flows. 

We note that normalizing flows take target samples $\bfz$ as input and return the corresponding Gaussian samples, as opposed to quantile functions which take uniform samples and output $\bfz$. To distinguish this reversed direction of mapping, we denote the PICNN used for normalizing flows as $\Flow_{\bftheta}$.

\paragraph{Invertible Gradient} Since the gradient of PICNN is used as normalizing flows, it needs to be invertible. To this end, an $l_2$ term is added to $\bfv_k$, the final layer of the PICNN \eqref{eq:final_PICNN}, i.e.,
\vspace{-2pt}
\begin{equation}
    \Flow_{\bftheta}(\bfz,\bfh) = \bfv_k(\bfz, \bfh) + \frac{\gamma}{2} \| \bfz \|_2^2,
    \label{eq:sc_PICNN}
    \vspace{-1pt}
\end{equation}
where $\gamma>0$ is a trainable parameter. The additional term renders $\Flow_{\bftheta}$ strongly convex, and hence its gradient is invertible. In \citet{huang2020}, they term the mapping of the gradient $\flow_{\bftheta} := \nabla_\bfz \Flow_{\bftheta}$ as convex potential flows.

\paragraph{Training} Given the training example pairs $\{(\bfx_i, \bfz_i)\}_{i=1}^m$, we minimize the negative log-likelihood, with $\tilde{L}_{\text{ML}}$ defined in~\eqref{eq:likelihood}, as
\begin{equation*}
    \min_{\bftheta,\bfphi} \frac{1}{m} \sum_{i=1}^m \tilde{L}_{\text{ML}}(\flow_\bftheta (\bfz_i, H_\bfphi(\bfx_i))).
\vspace{-7pt}
\end{equation*}

\paragraph{Inference} The normalizing flow can serve as a (generalized) quantile function~\citep{carlier2016}, which takes inputs drawn from an isotropic Gaussian distribution and output the predicted target samples.

In particular, we first sample $\bfy \in \mathbb{R}^n$ from the isotropic Gaussian distribution. Then we go backward through the flow to obtain the prediction $\tilde{\bfz}$.
To this end, we solve the convex minimization problem 
\begin{equation}
\min_{\bfz}   \Flow_{\bftheta}(\bfz, \bfh) - \bfz^\top \bfy,
    \label{eq:invert_flow}
\end{equation}
whose minimum $\tilde{\bfz}$ satisfies $\flow_{\bftheta}(\tilde{\bfz}, \bfh) = \bfy$. That is $\tilde{\bfz} = \flow_{\bftheta}^{-1}(\bfy, \bfh)$. The minimization problem~\eqref{eq:invert_flow} is solved using the L-BFGS algorithm~\citep{liu1989}.

\paragraph{Monotonicity} Since we are using the inverse of the normalizing flow as the quantile function, it is important to show that the inverse $\flow_{\bftheta}^{-1}$ is also monotone, i.e., it is the gradient of a convex function. The following proposition guarantees the monotonicity of $\flow_{\bftheta}^{-1}(\cdot, \bfh)$.

\begin{proposition}\label{prop:monotonicity}
Let $\mathcal{D} \subseteq \mathbb{R}^n$ be open, $G: \mathcal{D} \to \mathbb{R}$ be a strongly convex and smooth function and $g$ be its gradient. Then $g^{-1}$ exists and is the gradient of a convex function.
\end{proposition}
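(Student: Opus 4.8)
The plan is to build an explicit convex potential for $g^{-1}$ via the Legendre--Fenchel conjugate of $G$. Assuming, as is implicit in ``strongly convex'', that $\mathcal D$ is convex, extend $G$ to all of $\mathbb R^n$ by setting $\bar G(x)=G(x)$ for $x\in\mathcal D$ and $\bar G(x)=+\infty$ otherwise; since $\mathcal D$ is open and $G$ is continuous, $\bar G$ is a proper lower semicontinuous convex function. Define its conjugate
\begin{equation*}
  G^\ast(y)=\sup_{x\in\mathbb R^n}\bigl(x^\top y-\bar G(x)\bigr)=\sup_{x\in\mathcal D}\bigl(x^\top y-G(x)\bigr),
\end{equation*}
which is convex as a pointwise supremum of affine functions. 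I then want to show that $g^{-1}$ is well defined on the open set $g(\mathcal D)$ and equals $\nabla G^\ast$ there, which is exactly the assertion.

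First I would record two consequences of strong convexity. (i) $g=\nabla G$ is strictly monotone, $(g(x_1)-g(x_2))^\top(x_1-x_2)\ge\mu\|x_1-x_2\|_2^2$ for some $\mu>0$, so $g$ is injective and $g^{-1}$ is a genuine function on $g(\mathcal D)$. (ii) $\nabla^2 G(x)\succeq\mu I$ everywhere, hence $\nabla^2 G(x)$ is invertible; by the inverse function theorem $g$ is a local $C^1$-diffeomorphism, so it is an open map, $g(\mathcal D)$ is open, and $g^{-1}$ is $C^1$ on it.

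Next I would identify $\nabla G^\ast$ on $g(\mathcal D)$. Fix $y=g(x_0)$ with $x_0\in\mathcal D$. The map $x\mapsto x^\top y-G(x)$ is strongly concave on $\mathcal D$ and has $x_0$ as a critical point, so $x_0$ is its unique global maximizer; hence $G^\ast(y)$ is finite (so $g(\mathcal D)$ lies in the interior of $\operatorname{dom}G^\ast$) and the supremum is attained only at $x_0=g^{-1}(y)$. By the standard conjugate--subgradient identity for proper lsc convex functions, $y\in\partial\bar G(x_0)\iff x_0\in\partial G^\ast(y)$, and uniqueness of the maximizer gives $\partial G^\ast(y)=\{x_0\}$; since a convex function is differentiable at an interior point of its domain precisely when its subdifferential there is a singleton, $G^\ast$ is differentiable at $y$ with $\nabla G^\ast(y)=x_0=g^{-1}(y)$. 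As $y\in g(\mathcal D)=\operatorname{dom}(g^{-1})$ was arbitrary, $g^{-1}=\nabla G^\ast$ is the gradient of the convex function $G^\ast$.

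The routine parts are the two strong-convexity estimates and the bookkeeping with the conjugacy identities. The step requiring care is the interplay of domains: one must check that a maximizer in the definition of $G^\ast(y)$ can only lie in $\mathcal D$ (outside it the objective is $-\infty$), that it is unique there, and that $g(\mathcal D)$ is open so that ``$\nabla G^\ast$ exists on $g(\mathcal D)$'' is meaningful. I would resist the tempting shortcut of merely noting that $g^{-1}$ has symmetric positive-definite Jacobian $(\nabla^2 G(g^{-1}(\cdot)))^{-1}$: that only yields a \emph{local} potential, and promoting it to a global \emph{convex} potential is precisely what the conjugate construction above supplies (a bare symmetry argument would additionally require $g(\mathcal D)$ to be simply connected, which is not assumed).
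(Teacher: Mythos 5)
Your proof is correct, but it takes a genuinely different route from the paper's. The paper argues via the Jacobian: it invokes an inverse-function-theorem corollary to get that $g^{-1}$ is smooth with $\nabla g^{-1}(\bfy)=(\nabla g(g^{-1}(\bfy)))^{-1}$ symmetric positive definite, and then constructs the potential explicitly by the radial line integral $\psi(\bfy)=\sum_j y_j\int_0^1 [g^{-1}]_j(t\bfy)\,dt$ (the Poincar\'e-lemma primitive), verifying $\nabla\psi=g^{-1}$ by integration by parts and reading off convexity from the SPD Hessian. This is exactly the ``bare symmetry'' shortcut you flag at the end: the line integral is only well defined if the segment from $0$ to $\bfy$ stays inside $g(\mathcal D)$, i.e.\ the paper implicitly assumes $g(\mathcal D)$ is star-shaped about the origin, which does not follow from the stated hypotheses. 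Your Legendre--Fenchel construction $G^\ast$ avoids this entirely: the potential is globally defined and convex by fiat, and you only need openness of $g(\mathcal D)$ (which you correctly derive) for differentiability of $G^\ast$ there, at the price of assuming $\mathcal D$ convex (reasonable, since strong convexity presupposes it) and of some convex-analysis bookkeeping. One small inaccuracy on your side: $\bar G$ need not be lower semicontinuous at boundary points of $\mathcal D$ where $G$ has a finite limit, so the biconditional $y\in\partial\bar G(x)\iff x\in\partial G^\ast(y)$ is not directly available; this is harmless, since you may replace $\bar G$ by its closure without changing $G^\ast$, and your strong-concavity estimate $x^\top y-G(x)\le G^\ast(y)-\tfrac{\mu}{2}\|x-x_0\|_2^2$ passes to the closure, so $\partial G^\ast(y)=\{x_0\}$ still follows. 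Net: your argument is, if anything, more watertight than the paper's on the topology of $g(\mathcal D)$.
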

For the proof of Proposition~\ref{prop:monotonicity}, we refer the readers to the Appendix. Note that the assumption of smoothness is satisfied when the PICNN architecture uses smooth activation functions such as the softplus function to render the whole network smooth.

\begin{table*}[!ht]
\renewrobustcmd{\bfseries}{\fontseries{b}\selectfont}
\sisetup{detect-weight,mode=text,group-minimum-digits = 2
}
\sisetup{table-figures-uncertainty=1} 

\centering
\scalebox{0.635}{
\begin{tabular}{
@{}ll 
r r r r | r r r r r @{}
 }
 \toprule
Dataset & Model  & \multicolumn{4}{c|}{Metrics over full horizon}  &
 \multicolumn{5}{c}{Mean Quantile Loss over differing forecast horizon} \\
 \cmidrule{3-5}
\cmidrule{6-11} 
& &
  {sum CRPS} &
  {Energy score} &
  {MSIS} &
  {mean\_wQL} &
  {1 step} &
  {5 steps} &
  {10 steps} &
  {15 steps} &
  {20 steps}  \\[0.15em]
  \midrule
&
MQCNN &
2323.5 $\pm$ 54.2 &
1282.1 $\pm$ 1.0 &
11.7 $\pm$ 0.0 &
0.086 $\pm$ 0.0 &
0.042 $\pm$ 0.0 &
0.125 $\pm$ 0.01 &
0.117 $\pm$ 0.02 &
0.094 $\pm$ 0.01 &
\ubold {0.062 $\pm$ 0.0}
\\[0.15em] 
&
DeepAR &
3059.6 $\pm$ 180.8 &
971.7 $\pm$ 59.0 &
7.3 $\pm$ 0.1 &
0.07 $\pm$ 0.0 &
\ubold {0.027 $\pm$ 0.0} &
\ubold {0.055 $\pm$ 0.0} &
\ubold {0.059 $\pm$ 0.0} &
0.074 $\pm$ 0.0 &
0.089 $\pm$ 0.01
\\[0.15em] 
&
\MQF + ES &
\ubold {1723.7 $\pm$ 122.8} &
\ubold {891.1 $\pm$ 32.1} &
\ubold {6.9 $\pm$ 0.1} &
\ubold {0.066 $\pm$ 0.0} &
0.031 $\pm$ 0.0 &
0.08 $\pm$ 0.01 &
0.102 $\pm$ 0.0 &
0.056 $\pm$ 0.0 &
0.068 $\pm$ 0.01
\\[0.15em] 
\multirow{-4}{*}{ \texttt{Elec}} &
\MQF + ML &
2332.523 ± 146.88 &
893.6 ± 53.8 &
7.2 ± 0.6 &
\ubold {0.066 ± 0.0} &
0.038 ± 0.01 &
0.088 ± 0.02 &
0.073 ± 0.01 &
\ubold {0.053 ± 0.0} &
0.067 ± 0.01
\\[0.15em] \hline  
&
MQCNN &
0.419 $\pm$ 0.33 &
0.161 $\pm$ 0.06 &
46.1 $\pm$ 1.6 &
0.993 $\pm$ 0.29 &
0.905 $\pm$ 0.4 &
5.909 $\pm$ 0.37 &
0.878 $\pm$ 0.42 &
0.871 $\pm$ 0.23 &
0.644 $\pm$ 0.18
\\[0.15em] 
&
DeepAR &
0.108 $\pm$ 0.01 &
0.061 $\pm$ 0.0 &
7.2 $\pm$ 0.1 &
0.131 $\pm$ 0.0 &
\ubold {0.074 $\pm$ 0.0} &
\ubold {0.163 $\pm$ 0.0} &
\ubold {0.117 $\pm$ 0.0} &
0.123 $\pm$ 0.0 &
0.144 $\pm$ 0.0
\\[0.15em] 
&
\MQF + ES &
\ubold {0.095 $\pm$ 0.0} &
\ubold {0.06 $\pm$ 0.0} &
7.2 $\pm$ 0.0 &
0.142 $\pm$ 0.0 &
0.104 $\pm$ 0.0 &
0.298 $\pm$ 0.01 &
0.139 $\pm$ 0.01 &
0.127 $\pm$ 0.0 &
\ubold {0.139 $\pm$ 0.01}
\\[0.15em] 
\multirow{-4}{*}{ \texttt{Traf}} &
\MQF + ML &
0.097 ± 0.0 &
0.062 ± 0.0 &
\ubold {6.6 ± 0.1} &
\ubold {0.13 ± 0.0} &
0.078 ± 0.0 &
0.165 ± 0.01 &
0.13 ± 0.0 &
\ubold {0.12 ± 0.0} &
0.14 ± 0.0
\\[0.15em] \hline  
&
MQCNN &
3089.8 $\pm$ 10.4 &
923.1 $\pm$ 3.6 &
41.9 $\pm$ 0.9 &
0.027 $\pm$ 0.0 &
\ubold {0.009 $\pm$ 0.0} &
0.019 $\pm$ 0.0 &
\ubold {0.024 $\pm$ 0.0} &
{-} &
{-}  
\\[0.15em] 
&
DeepAR &
3186.2 $\pm$ 966.5 &
989.3 $\pm$ 244.3 &
50.5 $\pm$ 8.0 &
0.039 $\pm$ 0.01 &
0.015 $\pm$ 0.0 &
0.028 $\pm$ 0.01 &
0.049 $\pm$ 0.01 &
{-} &
{-}  
\\[0.15em] 
&
\MQF + ES &
\ubold {1752.0 $\pm$ 47.3} &
\ubold {619.2 $\pm$ 8.7} &
31.1 $\pm$ 0.3 &
\ubold {0.024 $\pm$ 0.0} &
0.013 $\pm$ 0.0 &
\ubold {0.019 $\pm$ 0.0} &
0.027 $\pm$ 0.0 &
{-} &
{-}  
\\[0.15em] 
\multirow{-4}{*}{ \texttt{M4-daily}} &
\MQF + ML &
1786.028 ± 60.94 &
622.0 ± 14.7 &
\ubold {30.5 ± 0.3} &
\ubold {0.024 ± 0.0} &
0.01 ± 0.0 &
0.019 ± 0.0 &
0.029 ± 0.0 &
{-} &
{-}  
\\[0.15em] \hline  
&
MQCNN &
9196.6 $\pm$ 175.4 &
3269.9 $\pm$ 34.6 &
18.7 $\pm$ 0.4 &
0.12 $\pm$ 0.0 &
0.072 $\pm$ 0.0 &
0.096 $\pm$ 0.0 &
0.115 $\pm$ 0.0 &
\ubold {0.134 $\pm$ 0.0} &
{-}  
\\[0.15em] 
&
DeepAR &
\ubold {7337.0 $\pm$ 345.5} &
2572.1 $\pm$ 95.0 &
14.0 $\pm$ 1.5 &
0.113 $\pm$ 0.0 &
0.063 $\pm$ 0.0 &
0.092 $\pm$ 0.0 &
0.115 $\pm$ 0.0 &
0.143 $\pm$ 0.01 &
{-}  
\\[0.15em] 
&
\MQF + ES &
7365.7 $\pm$ 218.1 &
\ubold {2554.6 $\pm$ 79.3} &
\ubold {12.8 $\pm$ 1.4} &
\ubold {0.112 $\pm$ 0.0} &
\ubold {0.059 $\pm$ 0.0} &
\ubold {0.087 $\pm$ 0.0} &
\ubold {0.113 $\pm$ 0.0} &
0.145 $\pm$ 0.0 &
{-}  
\\[0.15em] 
\multirow{-4}{*}{ \texttt{M4-monthly}} &
\MQF + ML &
8235.445 ± 0.0 &
2839.7 ± 0.0 &
14.4 ± 0.0 &
0.124 ± 0.0 &
0.066 ± 0.0 &
0.1 ± 0.0 &
0.124 ± 0.0 &
0.159 ± 0.0 &
{-}  
\\ [0.15em] \hline
&
MQCNN &
3753.7 $\pm$ 28.1 &
1976.2 $\pm$ 12.8 &
\ubold {34.2 $\pm$ 0.3} &
\ubold {0.115 $\pm$ 0.0} &
\ubold {0.064 $\pm$ 0.0} &
0.141 $\pm$ 0.0 &
{-} &
{-} &
{-}  
\\[0.15em] 
&
DeepAR &
3749.1 $\pm$ 42.6 &
1917.1 $\pm$ 7.9 &
34.9 $\pm$ 0.6 &
0.118 $\pm$ 0.0 &
0.065 $\pm$ 0.0 &
0.145 $\pm$ 0.0 &
{-} &
{-} &
{-}  
\\[0.15em] 
&
\MQF + ES &
\ubold {3649.3 $\pm$ 60.9} &
\ubold {1859.4 $\pm$ 28.1} &
36.7 $\pm$ 1.6 &
0.116 $\pm$ 0.0 &
0.075 $\pm$ 0.0 &
\ubold {0.135 $\pm$ 0.0} &
{-} &
{-} &
{-}  
\\[0.15em] 
\multirow{-4}{*}{ \texttt{M4-yearly}} &
\MQF + ML &
3784.486 ± 105.76 &
1913.2 ± 35.2 &
38.8 ± 2.1 &
0.119 ± 0.0 &
0.07 ± 0.0 &
0.143 ± 0.0 &
{-} &
{-} &
{-}  
\\[0.15em] \bottomrule
\end{tabular}
}
\caption{ Results of \MQF\ compared with other state of the art methods (for all columns lower is better.) We show the mean and standard deviation over 3 training runs. A ``-" indicates that the corresponding time step is beyond the prediction length of the dataset.}
\label{tab:mqf_resuts}
\vspace{-5pt}
\end{table*}

\section{EXPERIMENTS}
\label{sec:experiments}
Our \MQF\ can be multivariate in prediction horizon (multi-horizon) and/or across multiple time series. In our experimental evaluations, we focus on the former case, where at a given time point the model outputs a distribution of multiple points into the future. This evaluation setup allows us to compare to standard univariate forecasting models, here MQCNN \citep{wen2017} and DeepAR \citep{salinas2020}. These two models represent different approaches for multi-horizon predictions. On the one hand, MQCNN factorizes the multivariate distribution over the time steps, considering them independently of each other and therefore the time dependency structure among them is ignored. On the other hand, DeepAR only predicts a single time step at a time and the model is unrolled to predict the full forecast horizon. By doing so it implicitly models the forward dependency among time steps, but at the cost of error accumulating.

We recall that \MQF\ is generic because it can be used in many sequence-to-sequence architectures as an alternative to the decoder. For our experiments, we choose to implement \MQF\ 
on top of a DeepAR encoder. We use the default hyperparameters for the comparison methods as found in GluonTS~\citep{alexandrov2020gluonts}. For \MQF\ we use the default parameters for the DeepAR encoder and PICNN with 40 hidden units and 5 hidden layers for the real experiments, and with 10 hidden units and 2 hidden layers for the synthetic experiments. We train the model to convergence (For real data experiments, we use 100 epochs for MQCNN and DeepAR and 300 for \MQF\ as it is more complex. For synthetic experiments, we use 50 epochs for all models). Otherwise all the hyperparameters are kept constant across models.  \MQF\ is implemented in  PyTorch\footnote{available at \url{https://github.com/awslabs/gluon-ts/tree/master/src/gluonts/torch/model/mqf2}.} \citep{paszke2017automatic}. We refer the readers to the Appendix for more details on the experimental details, model hyperparameters and their robustness.

We evaluate our model on both real and synthetic data. For the real experiments, we evaluate the methods on several real-world datasets and report the performance in terms of various univariate and multivariate metrics. For the synthetic experiment, we test the ability of different models to learn and predict artificial data which follow a Gaussian process.

\subsection{Experiments on Real Data}
We perform experiments on \texttt{Elec} and \texttt{Traf} from the UCI data repository~\citep{Dua:2017}, and different \texttt{M4} competition datasets~\citep{makridakisM4concl}. The results are shown in Table~\ref{tab:mqf_resuts}. Experimental results in terms of more metrics and hyperparameter robustness tests are available in the Appendix. In the following we analyze them along different angles.

\textbf{\MQF\ is competitive with the state of the art.}
Table~\ref{tab:mqf_resuts} shows the mean scaled interval score (MSIS)~\citep{gneiting2007} 
and mean weighted quantile loss, averaged over the $\{0.1, 0.2, ..., 0.9\}$ quantiles and over the full forecast horizon. These are univariate probabilistic forecasting metrics which are computed at each point in the forecast horizon and are averaged over the points.
We observe that \MQF\ is very competitive with the state of the art. In particular, under these two metrics, \MQF\ performs the best in all but 1 dataset (in which \MQF's performance is close to the best one). While a main advantage of our method is to model the time dependencies across the time dimension of the forecast horizon, its performance on modeling the marginal distributions is comparable to that of MQCNN, which by design only learns such distributions.

\begin{figure}[ht]
    \centering
    \begin{subfigure}[b]{0.49\columnwidth}
        \centering
        \includegraphics[width=0.99\columnwidth]{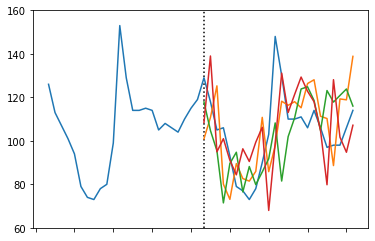}
        \caption{MQCNN}
        \label{fig:sample_path_MQCNN}
    \end{subfigure}
    \begin{subfigure}[b]{0.49\columnwidth}
        \centering
        \includegraphics[width=0.99\columnwidth]{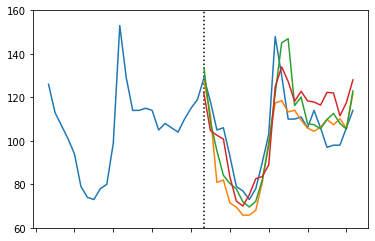}
        \caption{MQF$^2$}
        \label{fig:sample_path_MQF2}
    \end{subfigure}
\caption{Three sample paths generated by MQCNN and MQF$^2$. The dotted vertical lines represent the start of the prediction horizon.}
\vspace{-10pt}
\label{fig:sample_path}
\end{figure}

\begin{figure*}[ht]
    \centering 
    \begin{subfigure}[b]{0.4\columnwidth}
        \centering
        \includegraphics[width=0.99\columnwidth]{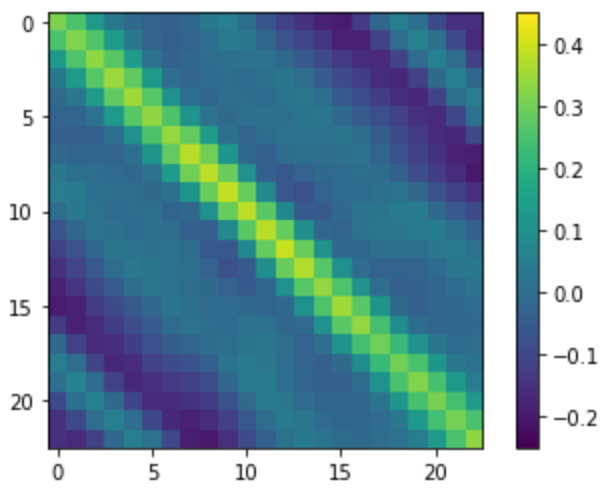}
        \caption{Correlation matrix of the training samples.}
        \label{fig:GP_GT}
    \end{subfigure}
    \begin{subfigure}[b]{0.4\columnwidth}
        \centering
        \includegraphics[width=0.99\columnwidth]{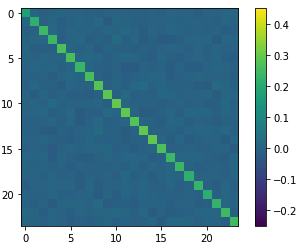}
        \caption{MQCNN \\ MAE: 0.075}
        \label{fig:GP_MQCNN}
    \end{subfigure}
    \begin{subfigure}[b]{0.4\columnwidth}
        \centering
        \includegraphics[width=0.99\columnwidth]{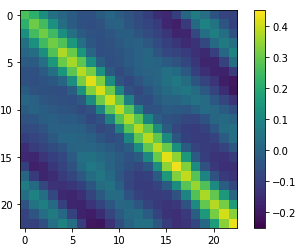}
        \caption{DeepAR \\ MAE: 0.033}
        \label{fig:GP_DEEPAR}
    \end{subfigure}
    \begin{subfigure}[b]{0.4\columnwidth}
        \centering
        \includegraphics[width=0.99\columnwidth]{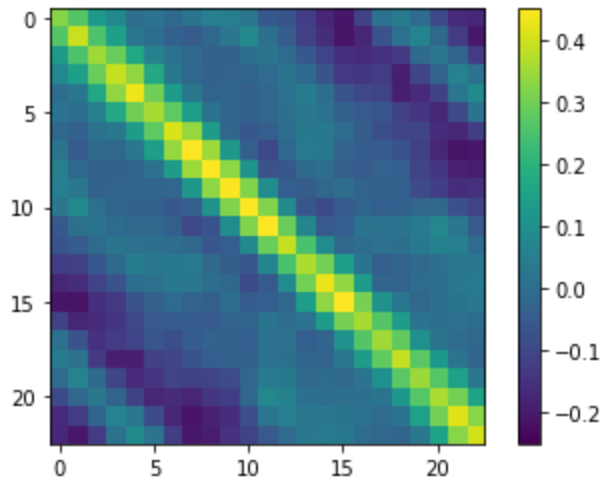}
        \caption{MQF$^2$ + ES \\ MAE: 0.023}
        \label{fig:GP_MQF2_ES}
    \end{subfigure}
    \begin{subfigure}[b]{0.4\columnwidth}
        \centering
        \includegraphics[width=0.99\columnwidth]{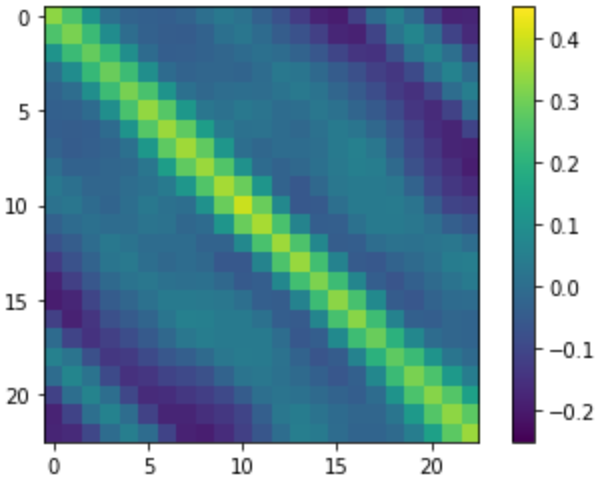}
        \caption{MQF$^2$ + ML \\ MAE: 0.019}
        \label{fig:GP_MQF2_ML}
    \end{subfigure}
    \caption{
    Experiments with generated samples from a Gaussian process over 24 time steps which have the correlation matrix visualized in (a). Figures (b)-(e) show the correlation matrices obtained from 200 samples of different models and the mean absolute error (MAE) between the model correlation matrix and the ground truth.
    }
    \label{fig:correlation_in_samples_synthetic}
    \vspace{-7pt}
\end{figure*}

\textbf{\MQF\ captures the time dependency between outputs.}
We use two multivariate metrics to evaluate the multivariate distributions produced by different models. First, we measure the energy score between samples from the forecasting models and the observed target time series. In addition, we compute CRPS between the sum of these samples and the sum of the observed target time series. The distribution of a sum depends on the dependency among its elements. Hence, accurately measuring the dependency between the time points will result in a better estimate of the distribution of their sum. These two metrics are shown in Table~\ref{tab:mqf_resuts} and are computed over the full forecast horizon of each dataset.
We see that \MQF\ outperforms the comparing methods by some margin, especially when it is trained with energy score. In particular, \MQF\ performs the best in all but 1 result, in which it is very close to the best method and reports a much lower standard deviation over training runs.

We observe that MQCNN is underperforming because it assumes that the time points over the prediction horizon are independent and hence cannot capture time dependency. On the other hand, on some datasets like \texttt{Traf}, \texttt{M4-monthly}, and \texttt{M4-yearly}, DeepAR's implicit modeling of the forward time dependencies allows it to obtain results very close to \MQF.

\textbf{\MQF\ avoids error accumulation.}
DeepAR is able to model the forward dependency across time points implicitly through the unrolling on samples, however this can result in error accumulation through the unrolling \citep{rangapuram2018deep}. To compare DeepAR with our model in this respect, we compute the mean weighted quantile losses on different forecast horizons. Table~\ref{tab:mqf_resuts} shows the loss for 1, 5, 10, 15, and 20 steps ahead. Note that on some datasets the selected steps are longer than the prediction length, and the loss cannot be computed beyond the prediction length. We see that \MQF\ has competitive performance across all time steps and datasets. For all the datasets either MQCNN or \MQF\ perform the best on the furthest quantile horizon, even on datasets where DeepAR performs the best at shorter horizons. However, in the results of \texttt{Traf} dataset, we observe that \MQF\ has a more stable performance than MQCNN, which reports very high losses at all the time steps.

\textbf{\MQF\ produces consistent sample paths.}
Beyond the quantitative evaluation of the multivariate distribution, we evaluate it qualitatively by visually inspecting predicted sample paths. 
In a model where the distribution over each of the time steps is modeled independently, sample paths would fail to represent the dependency between time points which can lead to unrealistic sampled forecasts. Figure~\ref{fig:sample_path} shows sample paths from MQCNN and \MQF\ on the same time series. We observe that the distributions of the samples at each time step are similar for both models. However, the sample paths from MQCNN fail to mimic the smoothness of the real time series, as each time point is modeled and sampled independently. On the contrary, note that the samples from \MQF\ indeed display realistic behavior because of its modeling of the time dependencies. We provide additional visualizations in the Appendix.

\subsection{Experiments on Synthetic Data}
In the real experiments, we observe that our \MQF\ best captures the time dependency structure. Here we further illustrate this advantage using a synthetic dataset of 500 time series of 24 points drawn from a Gaussian process (GP) with a correlation matrix shown in Figure~\ref{fig:correlation_in_samples_synthetic}(a). The kernel of the GP governing the covariance between time points is composed of a radial basis function kernel and a periodic kernel, resulting in a complex correlation structure.

We evaluate how well the different methods can model the marginal GP distribution. We train each of the methods on the GP samples and then generate 200 sample paths for each of them and compute the correlation matrix of the generated sample paths.
The correlation matrices are shown in  Figure~\ref{fig:correlation_in_samples_synthetic}(b)-(e).
In addition to the visualization, we compute the mean absolute error (MAE) between the correlation matrix from model samples and the true correlation matrix. If a method captures the GP well, it will generate sample paths which closely follow the distribution and hence report a correlation matrix similar to the true one. 

We see that MQCNN generates a correlation matrix which is essentially diagonal and reports the highest error. This shows that it fails to capture the correlation, as it assumes each time point to be independent and therefore ignores the time dependency structure.
For the DeepAR method, its unrolling mechanism allows it to capture the correlation matrix reasonably well and report a much lower error than MQCNN. Finally, as our \MQF\ explicitly considers the whole sample path at once, it best approximates the true correlation matrix and has the lowest errors.

\section{DISCUSSION}
\label{sec:conclusion}
In this paper, we presented \MQF, a novel method for probabilistic forecasting via a multivariate quantile function that we model as the gradient of an input convex neural network. Our experiments show that we maintain favorable properties of prior work on (univariate) quantile functions for probabilistic forecasts while addressing some of their shortcomings. In particular, sample paths (which are a commonly-used way of passing probabilistic forecasts to downstream components) can easily be generated from our model and correctly reflect the dependency structure across time (which also makes them visually coherent). Further, there is no accumulation of forecast error over the length of the forecast horizon and our method is overall very competitive with the state of the art. 

Despite these benefits, there are situations and applications where alternative approaches might be better suited. In particular, autoregressive constructions that decompose the joint distribution into its telescoping univariate marginals \citep{wei2008, uria2013, papamakarios2017, wang2019, jaini2019}, allow the quantile levels to retain their classical probabilistic interpretation (e.g.\ for the construction of univariate prediction intervals) and provide direct access to certain conditional distributions of interest (future conditioned on past). Similarly, multi-horizon approaches provide direct access to the univariate marginal distributions, which in our approach can only be obtained through sampling. In fact, an interesting avenue for future work is to explore whether a multivariate quantile function model can be constructed that retains the ability to access marginal and conditional distributions without resorting to sampling.  
Future work could further extend our approach to the practically important case of count distributions and assess the quality of our approach for quantile functions jointly over the time and item dimensions. Finally, more suitable forecasters in domain adaptation \citep{jin2022domain} with faster training schemes \citep{pmlr-v139-lu21d} can be developed, ultimately being able to be incorporated for downstream decision makings, e.g., planning cloud computing and vehicle controllers \citep{park2019linear, park2020structured, kim2020optimal}.


\subsubsection*{Acknowledgements}
The authors would like to thank the five anonymous referees for their thorough review and constructive suggestions. They would also like to thank Michael Bohlke-Schneider, Syama Sundar Rangapuram, Lorenzo Stella, and Jasper Zschiegner for reviewing the code and giving useful advice. Moreover, they would like to thank Levon Nurbekyan and Samy Wu Fung for the helpful discussion.

\bibliography{main}

\clearpage
\appendix

\thispagestyle{empty}

\onecolumn \makesupplementtitle

\section{EXPERIMENTAL DETAILS}
\subsection{Datasets}
Table~\ref{tab:datasets} lists the information of the datasets used in the experiments. The datasets are available in the GluonTS dataset repository.\footnote{\url{https://github.com/awslabs/gluon-ts/blob/master/src/gluonts/dataset/repository/datasets.py.}}
\begin{table}[h]
\centering
\begin{tabular}{|l|l|c|c|c|c|c|c|c|c}
\hline
\textsc{domain} & \textsc{name} & \textsc{support} & \textsc{freq} & \textsc{no. ts} & \textsc{avg. len} &  \textsc{pred. len} & \textsc{no. covariates} 

\\
\hline
\hline
 electrical load & \texttt{Elec} & $\mathbb{R}^+$ & H & 321 & 21044 & 24 & 4\\ \hline 
road traffic & \texttt{Traf} & $[0, 1]$ & H & 862 & 14036 & 24 & 4\\\hline 
\multirow{4}{*}{\begin{tabular}[c]{@{}l@{}}M4 forecasting \\ competition \\   \end{tabular}} & \texttt{M4-daily} & $\mathbb{R}^+$ & D & 4227  & 2357 &14 & 3\\
 & \texttt{M4-weekly} & $\mathbb{R}^+$ &  W & 359 & 1022 & 13 & 2\\
  & \texttt{M4-monthly} & $\mathbb{R}^+$ & M & 48000 & 216 & 18&1\\
  & \texttt{M4-quarterly} & $\mathbb{R}^+$ & Q  & 24000 & 92 & 8& 1\\
    & \texttt{M4-yearly} & $\mathbb{R}^+$ & Y & 23000 & 31 & 6 & 0\\
\hline
\end{tabular}
\caption{Summary of dataset statistics, where \texttt{Elec} and \texttt{Traf} are dervied from the UCI data repository~\citep{Dua:2017}, and \texttt{M4} are competition datasets~\citep{makridakisM4concl}.}
\label{tab:datasets}
\end{table}

\subsection{Hyperparameters}
The hyperparameters used in the experiments are listed in Table~\ref{table:hyperparameters}. For the RNN parameters we use the default setting of the \texttt{DeepAREstimator} in the GluonTS package~\citep{alexandrov2020gluonts}. The other hyperparameters were selected by performing a grid search only on the \texttt{Elec}
dataset, and were used as default values on all the other datasets.
\begin{table}[h]
\centering
\begin{tabular}{|l|l|c|}
\hline
\textsc{Type} & \textsc{Hyperparameter} & \textsc{Value}
\\
\hline
\hline
 \multirow{2}{*}{\begin{tabular}[c]{@{}l@{}} RNN   \end{tabular}}& layers & 2\\ 
 & nodes & 40 \\ \hline
\multirow{2}{*}{\begin{tabular}[c]{@{}l@{}} PICNN   \end{tabular}}& layers & 5\\ 
 & nodes & 40 \\ \hline
 Energy Score & num. of samples & 50\\ \hline
 \multirow{2}{*}{\begin{tabular}[c]{@{}l@{}} Training \end{tabular}} & epochs & 100 / 300\\
 & batch size & 32 \\
\hline
\end{tabular}
\caption{Summary of hyperparameters. For the number of training epochs, 100 is used for DeepAR and MQCNN, and 300 is used for MQF$^2$. This is because DeepAR and MQCNN have already converged after 100 epochs and MQF$^2$ takes more epochs to converge.}
\label{table:hyperparameters}
\end{table}

\section{DEFINITION OF EVALUATION METRICS}

Consider the target value $z_{i,t}$ for the $i$-th time series at time $t$, where $i=1,...,m$ and $t=T+1,...,T+\tau$, and the corresponding predictions $\{\hat z_{i,t,s}\}_{s=1}^{S}$ from $S$ sample paths. We denote the $\alpha$-quantile of the predictions as $\hat z_{i,t}^\alpha$.

\subsection{Mean Weighted Quantile Loss}

The $\alpha$-quantile loss is defined as
\begin{equation*}
\rho_{\alpha}(z_{i,t}, \hat z_{i,t}^\alpha) = (z_{i,t} - \hat z_{i,t}^\alpha)(\alpha - \mathbf{1}\{z_{i,t} - \hat z_{i,t}^\alpha < 0 \}).
\end{equation*}
The mean weighted quantile loss is defined as
\begin{equation*}
\text{mean}\_\text{wQL} =  \frac{1}{|\mathcal{A}|}\sum_{\alpha \in \mathcal{A}} 
\frac{\sum_{i=1}^m \sum_{t=T}^{T+\tau} 2\rho_\alpha(z_{i,t}, \hat z_{i,t}^\alpha)} {\sum_{i=1}^m \sum_{t=T}^{T+\tau} |z_{i,t}|},
\end{equation*}
where $\mathcal{A}$ is a set of prespecified quantile levels. In our experiments, we used $\mathcal{A}=\{0.1, 0.2, 0.3, 0.4, 0.5, 0.6, 0.7, 0.8, 0.9\}$.


\subsection{$n$-th Step Mean Weighted Quantile Loss}
The mean weighted quantile loss at the $n$-th step is defined as 
\begin{equation*}
    \text{mean}\_\text{wQL}(n) = \frac{1}{|\mathcal{A}|}\sum_{\alpha \in \mathcal{A}} 
\frac{\sum_{i=1}^m 2\rho_\alpha(z_{i,n}, \hat z_{i,n}^\alpha)} {\sum_{i=1}^m |z_{i,n}|}.
\end{equation*}

\subsection{Sum CRPS}
The sum CRPS is the (approximated) CRPS for the sum of the predictions over the prediction horizon and defined as
\begin{equation*}
    \text{sum}\_\text{CRPS} = \frac{1}{m} \sum_{i=1}^m \left( -\frac{1}{2|S|^2} \sum_{j=1}^S \sum_{k=1}^S |\hat u_{i,j} - \hat u_{i,k}| + \frac{1}{|S|} \sum_{j=1}^S |\hat u_{i,j} - u_i| \right),
\end{equation*}
where $\hat u_{i,j} = \sum_{t} \hat z_{i,t,j}$ and $u_i = \sum_{t} u_{i,t}$.

\subsection{Mean Scaled Interval Score}
The mean scaled interval score (MSIS) is defined as
\[ 
\begin{aligned}
\text{MSIS}(\zeta) =
  \frac{1}{\text{SE}(z)}\Big( \frac{1}{m\tau} \sum_{i=1}^{m} \sum_{t=T+1}^{T+\tau} (\hat z_{i,t}^{\alpha_U} - \hat z_{i,t}^{\alpha_L} + &\frac{2}{\zeta}[(\hat z_{i,t}^{\alpha_L}-z_{i,t})\mathbf{1}\{z_{i,t} < \hat z_{i,t}^{\alpha_L}\} + (z_{i,t}-\hat z_{i,t}^{\alpha_U})\mathbf{1}\{z_{i,t}>\hat z_{i,t}^{\alpha_U}\}]\Big),
\end{aligned}
\]
where the upper quantile $\alpha_U = 1-\zeta/2$, and the lower quantile $\alpha_L = \zeta/2$. The seasonal error SE for time series frequency $f$ is given by
\[
\text{SE}(z) = \frac{1}{m(T-f)}\sum_{i=1}^m \sum_{t'=1}^{T-f} |z_{i,t'} - z_{i, t'+f}|.
\]

\section{PROOF OF PROPOSITION 1}
Here, we state Proposition 1 again and provide the proof.
\setcounter{proposition}{0}
\begin{proposition}
Let $\mathcal{D} \subseteq \mathbb{R}^n$ be open, $G: \mathcal{D} \to \mathbb{R}$ be a strongly convex and smooth function and $g$ be its gradient. Then $g^{-1}$ exists and is the gradient of a convex function.
\end{proposition}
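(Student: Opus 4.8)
The plan is to realize $g^{-1}$ as the gradient of the convex (Legendre--Fenchel) conjugate of $G$, using strong convexity to make sure everything is well defined.

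First I would unpack the hypotheses. Strong convexity means $G(x)-\tfrac{\mu}{2}\|x\|_2^2$ is convex for some $\mu>0$, equivalently $\nabla^2 G\succeq\mu I$, equivalently $g=\nabla G$ is strictly monotone: $(g(x)-g(y))^\top(x-y)\ge\mu\|x-y\|_2^2$. Strict monotonicity makes $g$ injective, so $g^{-1}$ is a well-defined map on $U:=g(\mathcal D)$; smoothness of $G$ together with $\nabla^2 G\succ0$ then lets me invoke the inverse function theorem to upgrade this to: $g$ is a diffeomorphism of $\mathcal D$ onto the open set $U$, with $Dg^{-1}(y)=\big(\nabla^2 G(g^{-1}(y))\big)^{-1}$ symmetric positive definite.

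Next I would introduce $G^*(y):=\sup_{x\in\mathcal D}\big(\langle x,y\rangle-G(x)\big)$. As a pointwise supremum of affine functions it is convex and lower semicontinuous on $\mathbb R^n$, and the quadratic lower bound $G(x)\ge G(x_0)+g(x_0)^\top(x-x_0)+\tfrac{\mu}{2}\|x-x_0\|_2^2$ (for $x_0,x\in\mathcal D$) shows the supremum is finite for every $y$, so $G^*$ is a finite convex function on all of $\mathbb R^n$. Fixing $y_0=g(x_0)\in U$, the same inequality shows the supremum defining $G^*(y_0)$ is attained uniquely at $x_0$ and, more usefully, that $G^*$ is dominated by the explicit smooth convex majorant $y\mapsto\langle x_0,y\rangle-G(x_0)+\tfrac{1}{2\mu}\|y-y_0\|_2^2$, with equality at $y_0$. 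Hence every subgradient of $G^*$ at $y_0$ is a subgradient of that majorant, which is the singleton $\{x_0\}$, so $G^*$ is differentiable at each $y_0\in U$ with $\nabla G^*(y_0)=x_0=g^{-1}(y_0)$. Thus $g^{-1}$ is the restriction to $U$ of $\nabla G^*$, the gradient of the convex function $G^*$. (If one prefers to bypass the conjugate, one can instead define $G^*$ on $U$ by the closed form $\langle g^{-1}(y),y\rangle-G(g^{-1}(y))$, derive $\nabla G^*=g^{-1}$ by the chain rule, and read off convexity from $\nabla^2 G^*=Dg^{-1}\succ0$.)

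The step I expect to need the most care is the interface between the abstract conjugate and the concrete inverse over a domain $\mathcal D$ that is merely assumed open: one must ensure $G^*$ is finite and differentiable exactly where $g^{-1}$ lives, and that its gradient there is genuinely $g^{-1}$ rather than some other subgradient selection---this is precisely what the strong-convexity quadratic bound and the majorant/subgradient argument secure. A minor secondary point is that $U=g(\mathcal D)$ need not be convex, so the conclusion is best read as ``$g^{-1}$ is the restriction of the gradient of a convex function on a convex set'', which the conjugate construction (on all of $\mathbb R^n$) provides automatically.
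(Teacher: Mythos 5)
Your proposal is correct, but it takes a genuinely different route from the paper. The paper's proof is differential--geometric: it uses the inverse function theorem to get that $\nabla g^{-1}(\bfy)$ is symmetric positive definite on $g(\mathcal{D})$, and then \emph{explicitly constructs} a potential via the Poincar\'e-lemma line integral $\psi(\bfy) = \sum_{j} y_j \int_0^1 [g^{-1}]_j(t\bfy)\,dt$, verifying $\nabla\psi = g^{-1}$ by integration by parts and reading off convexity from positive definiteness of the Hessian. You instead identify the convex potential directly as the Legendre--Fenchel conjugate $G^*$ and establish $\nabla G^*(y_0) = g^{-1}(y_0)$ via the quadratic majorant and a subgradient argument. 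Each approach buys something: the paper's is elementary and self-contained (no convex duality), but its line-integral construction tacitly requires the segment $\{t\bfy : t\in[0,1]\}$ to lie in $g(\mathcal{D})$, i.e.\ that $g(\mathcal{D})$ be star-shaped about the origin --- a gap you avoid entirely, since $G^*$ is defined and convex on all of $\R^n$ and you correctly note that the conclusion should be read as ``$g^{-1}$ is the restriction of the gradient of a globally defined convex function.'' Moreover, your majorant argument yields differentiability of $G^*$ on $g(\mathcal{D})$ from strong convexity alone, so the smoothness hypothesis is only needed for the inverse-function-theorem bookkeeping, not for the core duality step. The one point you should make explicit is that your quadratic lower bound $G(x)\ge G(x_0)+g(x_0)^\top(x-x_0)+\tfrac{\mu}{2}\|x-x_0\|_2^2$ requires $\mathcal{D}$ to be convex (the statement only assumes it open); this is a hypothesis the paper also implicitly relies on, so it is a shared caveat rather than a defect specific to your argument.
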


\begin{proof}
The strong convexity and smoothness of $G$ implies the existence of $g^{-1}$ and that $\nabla g(\bfx)$ is symmetric positive definite (SPD) for all $\bfx \in \mathcal{D}$. Since $g$ is one-to-one, smooth, and $\nabla g(\bfx)$ is SPD for all $\bfx \in \mathcal{D}$, by \citet[Corollary A.2]{kass2011}, $g^{-1}$ is also smooth and therefore $\nabla g^{-1} (g(\bfx)) \nabla g(\bfx) = \bfI_n$ for all $\bfx \in \mathcal{D}$. This implies $\nabla g^{-1} (\bfy)$ is SPD for all $\bfy \in g(\mathcal{D})$ and hence
\begin{equation}\label{eq:symmetry_2nd_deri}
    \frac{\partial [g^{-1}]_i}{\partial y_j} =\frac{\partial [g^{-1}]_j}{\partial y_i} \quad \text{for all } i,j=1,2,...,n.
\end{equation}
Let $\psi(\bfy) = \sum_{j=1}^{n} y_j \int_0^1 [g^{-1}]_j (t\bfy) dt$. Consider its partial derivative
\begin{align*}
    \frac{\partial \psi}{\partial y_i} (\bfy) &= \int_0^1 [g^{-1}]_i (t\bfy) dt + \int_0^1 \sum_{j=1}^{n} y_j t \frac{\partial [g^{-1}]_j}{\partial y_i} (t\bfy) dt \\
    \intertext{using~\eqref{eq:symmetry_2nd_deri}, we get}
     &= \int_0^1 [g^{-1}]_i (t\bfy) dt + \int_0^1 \sum_{j=1}^{n} y_j t \frac{\partial [g^{-1}]_i}{\partial y_j} (t\bfy) dt \\
     \intertext{applying the chain rule $\frac{\partial}{\partial t} \Big( [g^{-1}]_i (t\bfy) \Big) = t \frac{\partial [g^{-1}]_i}{\partial y_j} (t\bfy)$, we obtain}
    &= \int_0^1 [g^{-1}]_i (t\bfy) dt + \int_0^1 t \frac{\partial}{\partial t} \Big( [g^{-1}]_i (t\bfy) \Big) dt \\
    \intertext{performing integration by parts, we have}
    &= \int_0^1 [g^{-1}]_i (t\bfy) dt + \left. t [g^{-1}]_i (t\bfy) \right|_{t=0}^{t=1} - \int_0^1 [g^{-1}]_i (t\bfy) dt \\
    &= [g^{-1}]_i (\bfy), \quad \text{for} \quad i=1,2,...,n.
\end{align*}

Therefore, $\nabla \psi=g^{-1}$. Moreover $\psi$ is convex because its Hessian $\nabla g^{-1} (\bfy)$ is SPD for all $\bfy \in g(\mathcal{D})$. Therefore, $g^{-1}$ is the gradient of the convex function $\psi$.
\end{proof}

\section{ADDITIONAL RESULTS TABLE}

\begin{table*}[!ht]
\renewrobustcmd{\bfseries}{\fontseries{b}\selectfont}
\sisetup{detect-weight,mode=text,group-minimum-digits = 2
}
\sisetup{table-figures-uncertainty=1} 

\centering
\scalebox{0.635}{
\begin{tabular}{
@{}ll 
r r r r | r r r r r @{}
 }
 \toprule
Dataset & Model  & \multicolumn{4}{c|}{Metrics over full horizon}  &
 \multicolumn{5}{c}{Mean Quantile Loss over differing forecast horizon} \\
 \cmidrule{3-5}
\cmidrule{6-11} 
& &
  {sum CRPS} &
  {Energy score} &
  {MSIS} &
  {mean\_wQL} &
  {1 step} &
  {5 steps} &
  {10 steps} &
  {15 steps} &
  {20 steps}  \\ 
  \midrule
&
MQCNN &
2323.5 $\pm$ 54.2 &
1282.1 $\pm$ 1.0 &
11.7 $\pm$ 0.0 &
0.086 $\pm$ 0.0 &
0.042 $\pm$ 0.0 &
0.125 $\pm$ 0.01 &
0.117 $\pm$ 0.02 &
0.094 $\pm$ 0.01 &
\ubold {0.062 $\pm$ 0.0}
\\  
&
DeepAR &
3059.6 $\pm$ 180.8 &
971.7 $\pm$ 59.0 &
7.3 $\pm$ 0.1 &
0.07 $\pm$ 0.0 &
\ubold {0.027 $\pm$ 0.0} &
\ubold {0.055 $\pm$ 0.0} &
\ubold {0.059 $\pm$ 0.0} &
0.074 $\pm$ 0.0 &
0.089 $\pm$ 0.01
\\  
&
\MQF + ES &
\ubold {1723.7 $\pm$ 122.8} &
\ubold {891.1 $\pm$ 32.1} &
\ubold {6.9 $\pm$ 0.1} &
\ubold {0.066 $\pm$ 0.0} &
0.031 $\pm$ 0.0 &
0.08 $\pm$ 0.01 &
0.102 $\pm$ 0.0 &
0.056 $\pm$ 0.0 &
0.068 $\pm$ 0.01
\\  
\multirow{-4}{*}{ \texttt{Elec}} &
\MQF + ML &
2332.523 ± 146.88 &
893.6 ± 53.8 &
7.2 ± 0.6 &
\ubold {0.066 ± 0.0} &
0.038 ± 0.01 &
0.088 ± 0.02 &
0.073 ± 0.01 &
\ubold {0.053 ± 0.0} &
0.067 ± 0.01
\\  \hline  
&
MQCNN &
0.419 $\pm$ 0.33 &
0.161 $\pm$ 0.06 &
46.1 $\pm$ 1.6 &
0.993 $\pm$ 0.29 &
0.905 $\pm$ 0.4 &
5.909 $\pm$ 0.37 &
0.878 $\pm$ 0.42 &
0.871 $\pm$ 0.23 &
0.644 $\pm$ 0.18
\\  
&
DeepAR &
0.108 $\pm$ 0.01 &
0.061 $\pm$ 0.0 &
7.2 $\pm$ 0.1 &
0.131 $\pm$ 0.0 &
\ubold {0.074 $\pm$ 0.0} &
\ubold {0.163 $\pm$ 0.0} &
\ubold {0.117 $\pm$ 0.0} &
0.123 $\pm$ 0.0 &
0.144 $\pm$ 0.0
\\  
&
\MQF + ES &
\ubold {0.095 $\pm$ 0.0} &
\ubold {0.06 $\pm$ 0.0} &
7.2 $\pm$ 0.0 &
0.142 $\pm$ 0.0 &
0.104 $\pm$ 0.0 &
0.298 $\pm$ 0.01 &
0.139 $\pm$ 0.01 &
0.127 $\pm$ 0.0 &
\ubold {0.139 $\pm$ 0.01}
\\  
\multirow{-4}{*}{ \texttt{Traf}} &
\MQF + ML &
0.097 ± 0.0 &
0.062 ± 0.0 &
\ubold {6.6 ± 0.1} &
\ubold {0.13 ± 0.0} &
0.078 ± 0.0 &
0.165 ± 0.01 &
0.13 ± 0.0 &
\ubold {0.12 ± 0.0} &
0.14 ± 0.0
\\  \hline  
&
MQCNN &
3089.8 $\pm$ 10.4 &
923.1 $\pm$ 3.6 &
41.9 $\pm$ 0.9 &
0.027 $\pm$ 0.0 &
\ubold {0.009 $\pm$ 0.0} &
0.019 $\pm$ 0.0 &
\ubold {0.024 $\pm$ 0.0} &
{-} &
{-}  
\\  
&
DeepAR &
3186.2 $\pm$ 966.5 &
989.3 $\pm$ 244.3 &
50.5 $\pm$ 8.0 &
0.039 $\pm$ 0.01 &
0.015 $\pm$ 0.0 &
0.028 $\pm$ 0.01 &
0.049 $\pm$ 0.01 &
{-} &
{-}  
\\  
&
\MQF + ES &
\ubold {1752.0 $\pm$ 47.3} &
\ubold {619.2 $\pm$ 8.7} &
31.1 $\pm$ 0.3 &
\ubold {0.024 $\pm$ 0.0} &
0.013 $\pm$ 0.0 &
\ubold {0.019 $\pm$ 0.0} &
0.027 $\pm$ 0.0 &
{-} &
{-}  
\\  
\multirow{-4}{*}{ \texttt{M4-daily}} &
\MQF + ML &
1786.028 ± 60.94 &
622.0 ± 14.7 &
\ubold {30.5 ± 0.3} &
\ubold {0.024 ± 0.0} &
0.01 ± 0.0 &
0.019 ± 0.0 &
0.029 ± 0.0 &
{-} &
{-}  
\\  \hline  
&
MQCNN &
3572.45 ± 104.73 &
1463.4 ± 18.8 &
62.3 ± 3.2 &
0.065 ± 0.0 &
0.045 ± 0.0 &
0.067 ± 0.0 &
0.072 ± 0.0 &
{-} &
{-}
\\ 
&
DeepAR &
2885.141 ± 443.69 &
1166.9 ± 119.6 &
26.0 ± 4.5 &
0.054 ± 0.01 &
\ubold {0.034 ± 0.0} &
\ubold {0.053 ± 0.0} &
0.062 ± 0.01 &
{-} &
{-}
\\ 
&
MQF$^2$ + ES &
2831.64 ± 175.06 &
1122.6 ± 30.6 &
\ubold {21.5 ± 1.5} &
\ubold {0.052 ± 0.0} &
0.043 ± 0.0 &
\ubold {0.053 ± 0.0} &
0.056 ± 0.0 &
{-} &
{-}
\\ 
\multirow{-4}{*}{ \texttt{M4-weekly}} &
MQF$^2$ + ML &
\ubold {2577.461 ± 107.32} &
\ubold {1107.9 ± 32.5} &
26.1 ± 1.8 &
\ubold {0.052 ± 0.0} &
0.039 ± 0.0 &
0.054 ± 0.0 &
\ubold {0.054 ± 0.0} &
{-} &
{-}
\\ \hline 
&
MQCNN &
9196.6 $\pm$ 175.4 &
3269.9 $\pm$ 34.6 &
18.7 $\pm$ 0.4 &
0.12 $\pm$ 0.0 &
0.072 $\pm$ 0.0 &
0.096 $\pm$ 0.0 &
0.115 $\pm$ 0.0 &
\ubold {0.134 $\pm$ 0.0} &
{-}  
\\  
&
DeepAR &
\ubold {7337.0 $\pm$ 345.5} &
2572.1 $\pm$ 95.0 &
14.0 $\pm$ 1.5 &
0.113 $\pm$ 0.0 &
0.063 $\pm$ 0.0 &
0.092 $\pm$ 0.0 &
0.115 $\pm$ 0.0 &
0.143 $\pm$ 0.01 &
{-}  
\\  
&
\MQF + ES &
7365.7 $\pm$ 218.1 &
\ubold {2554.6 $\pm$ 79.3} &
\ubold {12.8 $\pm$ 1.4} &
\ubold {0.112 $\pm$ 0.0} &
\ubold {0.059 $\pm$ 0.0} &
\ubold {0.087 $\pm$ 0.0} &
\ubold {0.113 $\pm$ 0.0} &
0.145 $\pm$ 0.0 &
{-}  
\\  
\multirow{-4}{*}{ \texttt{M4-monthly}} &
\MQF + ML &
8235.445 ± 0.0 &
2839.7 ± 0.0 &
14.4 ± 0.0 &
0.124 ± 0.0 &
0.066 ± 0.0 &
0.1 ± 0.0 &
0.124 ± 0.0 &
0.159 ± 0.0 &
{-}  
\\ \hline
&
MQCNN &
3348.365 ± 53.53 &
1718.6 ± 26.4 &
15.6 ± 1.6 &
0.09 ± 0.0 &
0.059 ± 0.0 &
0.096 ± 0.0 &
{-} &
{-} &
{-}
\\ 
&
DeepAR &
3184.673 ± 46.2 &
1575.3 ± 34.2 &
15.0 ± 2.7 &
\ubold {0.085 ± 0.0} &
\ubold {0.051 ± 0.0} &
\ubold {0.092 ± 0.0} &
{-} &
{-} &
{-}
\\ 
&
MQF$^2$ + ES &
\ubold {3134.404 ± 235.3} &
\ubold {1533.2 ± 81.5} &
11.8 ± 0.5 &
\ubold {0.085 ± 0.0} &
0.054 ± 0.0 &
\ubold {0.092 ± 0.01} &
{-} &
{-} &
{-}
\\ 
\multirow{-4}{*}{ \texttt{M4-quarterly}} &
MQF$^2$ + ML &
3338.119 ± 135.57 &
1591.5 ± 47.0 &
\ubold {11.7 ± 0.8} &
0.088 ± 0.0 &
0.053 ± 0.0 &
0.095 ± 0.0 &
{-} &
{-} &
{-}
\\ \hline 
&
MQCNN &
3753.7 $\pm$ 28.1 &
1976.2 $\pm$ 12.8 &
\ubold {34.2 $\pm$ 0.3} &
\ubold {0.115 $\pm$ 0.0} &
\ubold {0.064 $\pm$ 0.0} &
0.141 $\pm$ 0.0 &
{-} &
{-} &
{-}  
\\  
&
DeepAR &
3749.1 $\pm$ 42.6 &
1917.1 $\pm$ 7.9 &
34.9 $\pm$ 0.6 &
0.118 $\pm$ 0.0 &
0.065 $\pm$ 0.0 &
0.145 $\pm$ 0.0 &
{-} &
{-} &
{-}  
\\  
&
\MQF + ES &
\ubold {3649.3 $\pm$ 60.9} &
\ubold {1859.4 $\pm$ 28.1} &
36.7 $\pm$ 1.6 &
0.116 $\pm$ 0.0 &
0.075 $\pm$ 0.0 &
\ubold {0.135 $\pm$ 0.0} &
{-} &
{-} &
{-}  
\\  
\multirow{-4}{*}{ \texttt{M4-yearly}} &
\MQF + ML &
3784.486 ± 105.76 &
1913.2 ± 35.2 &
38.8 ± 2.1 &
0.119 ± 0.0 &
0.07 ± 0.0 &
0.143 ± 0.0 &
{-} &
{-} &
{-}  
\\  \bottomrule
\end{tabular}
}
\caption{Results (with additional datasets) of \MQF\ compared with other state of the art methods (for all columns lower is better.) We show the mean and standard deviation over 3 training runs. A ``-" indicates that the corresponding time step is beyond the prediction length of the dataset.}
\label{tab:mqf_resuts_add_data}
\end{table*}

\begin{table*}[!ht]
\renewrobustcmd{\bfseries}{\fontseries{b}\selectfont}
\sisetup{detect-weight,mode=text,group-minimum-digits = 2
}
\sisetup{table-figures-uncertainty=1} 

\centering
\scalebox{0.7}{
\begin{tabular}{
@{}ll 
r r r | r r r | r r r @{}
 }
 \toprule
Dataset & Model  & \multicolumn{3}{c|}{Point forecast metrics}  &
 \multicolumn{4}{c}{Probabilistic metrics} \\
 \cmidrule{3-5}
\cmidrule{6-9} 
& &
  {MASE} &
  {sMAPE} &
  {NRMSE} &
  {wQL 0.1} &
  {wQL 0.5} &
  {wQL 0.9} &
  {MAE coverage}  \\ 
  \midrule
&
MQCNN &
1.179 ± 0.0 &
0.168 ± 0.0 &
0.843 ± 0.0 &
0.055 ± 0.0 &
0.107 ± 0.0 &
0.055 ± 0.0 &
\ubold {0.034 ± 0.0}
\\ 
&
DeepAR &
0.95 ± 0.0 &
0.129 ± 0.0 &
0.757 ± 0.0 &
\ubold {0.034 ± 0.0} &
\ubold {0.082 ± 0.0} &
0.049 ± 0.0 &
0.205 ± 0.0
\\ 
&
MQF$^2$ + ES &
1.568 ± 1.143 &
0.177 ± 0.093 &
1.157 ± 0.953 &
0.077 ± 0.073 &
0.153 ± 0.127 &
0.072 ± 0.051 &
0.074 ± 0.011
\\ 
\multirow{-4}{*}{ \texttt{Elec}} &
MQF$^2$ + ML &
\ubold {0.918 ± 0.051} &
\ubold {0.121 ± 0.004} &
\ubold {0.647 ± 0.037} &
0.036 ± 0.002 &
0.083 ± 0.004 &
\ubold {0.045 ± 0.004} &
0.105 ± 0.008
\\ \hline  
&
MQCNN &
3.155 ± 0.0 &
0.998 ± 0.0 &
0.892 ± 0.0 &
0.864 ± 0.0 &
0.655 ± 0.0 &
2.197 ± 0.0 &
0.454 ± 0.0
\\ 
&
DeepAR &
\ubold {0.598 ± 0.0} &
0.157 ± 0.0 &
0.417 ± 0.0 &
0.071 ± 0.0 &
\ubold {0.156 ± 0.0} &
\ubold {0.107 ± 0.0} &
0.046 ± 0.0
\\ 
&
MQF$^2$ + ES &
0.667 ± 0.014 &
0.2 ± 0.002 &
\ubold {0.407 ± 0.004} &
0.074 ± 0.002 &
0.171 ± 0.003 &
0.118 ± 0.002 &
\ubold {0.038 ± 0.013}
\\ 
\multirow{-4}{*}{ \texttt{Traf}} &
MQF$^2$ + ML &
0.604 ± 0.004 &
\ubold {0.156 ± 0.001} &
0.415 ± 0.003 &
\ubold {0.064 ± 0.001} &
0.156 ± 0.001 &
0.111 ± 0.002 &
0.046 ± 0.025
\\ \hline  
&
MQCNN &
3.892 ± 0.0 &
0.035 ± 0.0 &
0.108 ± 0.0 &
0.021 ± 0.0 &
0.032 ± 0.0 &
0.016 ± 0.0 &
\ubold {0.03 ± 0.0}
\\ 
&
DeepAR &
4.256 ± 0.0 &
0.038 ± 0.0 &
0.107 ± 0.0 &
0.022 ± 0.0 &
0.034 ± 0.0 &
0.017 ± 0.0 &
0.036 ± 0.0
\\ 
&
MQF$^2$ + ES &
3.76 ± 0.125 &
0.035 ± 0.001 &
0.103 ± 0.001 &
0.018 ± 0.0 &
0.031 ± 0.001 &
0.014 ± 0.001 &
0.079 ± 0.022
\\ 
\multirow{-4}{*}{ \texttt{M4-daily}} &
MQF$^2$ + ML &
\ubold {3.584 ± 0.134} &
\ubold {0.034 ± 0.001} &
\ubold {0.102 ± 0.002} &
\ubold {0.017 ± 0.0} &
\ubold {0.029 ± 0.001} &
\ubold {0.013 ± 0.0} &
0.077 ± 0.047
\\ \hline  
&
MQCNN &
3.463 ± 0.0 &
0.098 ± 0.0 &
0.133 ± 0.0 &
0.039 ± 0.0 &
0.069 ± 0.0 &
0.057 ± 0.0 &
0.082 ± 0.0
\\ 
&
DeepAR &
3.362 ± 0.0 &
0.093 ± 0.0 &
0.127 ± 0.0 &
0.029 ± 0.0 &
0.069 ± 0.0 &
0.037 ± 0.0 &
0.133 ± 0.0
\\ 
&
MQF$^2$ + ES &
\ubold {3.005 ± 0.16} &
0.09 ± 0.005 &
\ubold {0.12 ± 0.002} &
\ubold {0.026 ± 0.001} &
0.066 ± 0.002 &
0.039 ± 0.004 &
0.072 ± 0.034
\\ 
\multirow{-4}{*}{ \texttt{M4-weekly}} &
MQF$^2$ + ML &
3.135 ± 0.19 &
\ubold {0.087 ± 0.004} &
0.124 ± 0.004 &
0.027 ± 0.001 &
\ubold {0.065 ± 0.002} &
\ubold {0.035 ± 0.001} &
\ubold {0.06 ± 0.019}
\\ \hline  
&
MQCNN &
1.217 ± 0.0 &
0.146 ± 0.0 &
0.306 ± 0.0 &
0.103 ± 0.0 &
0.133 ± 0.0 &
0.092 ± 0.0 &
\ubold {0.091 ± 0.0}
\\ 
&
DeepAR &
1.255 ± 0.0 &
\ubold {0.145 ± 0.0} &
\ubold {0.294 ± 0.0} &
\ubold {0.069 ± 0.0} &
\ubold {0.13 ± 0.0} &
0.084 ± 0.0 &
0.1 ± 0.0
\\ 
&
MQF$^2$ + ES &
\ubold {1.141 ± 0.022} &
0.149 ± 0.002 &
0.306 ± 0.006 &
0.073 ± 0.004 &
0.133 ± 0.002 &
\ubold {0.083 ± 0.005} &
0.097 ± 0.018
\\ 
\multirow{-4}{*}{ \texttt{M4-monthly}} &
MQF$^2$ + ML &
1.29 ± 0.0 &
0.165 ± 0.0 &
0.327 ± 0.0 &
0.08 ± 0.0 &
0.146 ± 0.0 &
0.098 ± 0.0 &
0.092 ± 0.0
\\ \hline  
&
MQCNN &
1.64 ± 0.0 &
0.122 ± 0.0 &
0.244 ± 0.0 &
0.055 ± 0.0 &
0.116 ± 0.0 &
0.064 ± 0.0 &
0.147 ± 0.0
\\ 
&
DeepAR &
\ubold {1.306 ± 0.0} &
\ubold {0.108 ± 0.0} &
\ubold {0.233 ± 0.0} &
\ubold {0.049 ± 0.0} &
\ubold {0.101 ± 0.0} &
0.06 ± 0.0 &
\ubold {0.026 ± 0.0}
\\ 
&
MQF$^2$ + ES &
1.364 ± 0.092 &
0.112 ± 0.005 &
0.235 ± 0.006 &
0.05 ± 0.003 &
0.104 ± 0.006 &
0.059 ± 0.004 &
0.067 ± 0.029
\\ 
\multirow{-4}{*}{ \texttt{M4-quarterly}} &
MQF$^2$ + ML &
1.444 ± 0.08 &
0.12 ± 0.004 &
0.244 ± 0.005 &
0.053 ± 0.003 &
0.11 ± 0.003 &
\ubold {0.058 ± 0.002} &
0.073 ± 0.035
\\ \hline  
&
MQCNN &
3.358 ± 0.0 &
\ubold {0.14 ± 0.0} &
\ubold {0.286 ± 0.0} &
0.086 ± 0.0 &
\ubold {0.134 ± 0.0} &
\ubold {0.09 ± 0.0} &
0.132 ± 0.0
\\ 
&
DeepAR &
\ubold {3.235 ± 0.0} &
0.14 ± 0.0 &
0.295 ± 0.0 &
\ubold {0.066 ± 0.0} &
0.138 ± 0.0 &
0.103 ± 0.0 &
\ubold {0.059 ± 0.0}
\\ 
&
MQF$^2$ + ES &
3.442 ± 0.137 &
0.146 ± 0.005 &
0.292 ± 0.005 &
0.071 ± 0.008 &
0.141 ± 0.003 &
0.1 ± 0.007 &
0.108 ± 0.024
\\ 
\multirow{-4}{*}{ \texttt{M4-yearly}} &
MQF$^2$ + ML &
3.507 ± 0.124 &
0.15 ± 0.004 &
0.295 ± 0.004 &
0.071 ± 0.004 &
0.143 ± 0.003 &
0.093 ± 0.003 &
0.083 ± 0.028
\\ \bottomrule
\end{tabular}
}
\caption{Results (with additional metrics) of \MQF\ compared with other state of the art methods. We show the mean and standard deviation over 3 training runs. A ``-" indicates that the corresponding time step is beyond the prediction length of the dataset.}
\label{tab:mqf_resuts_add_metric}
\end{table*}

\section{ROBUSTNESS EXPERIMENTS}
To show that our \MQF\ is robust with respect to its hyperparameters, we perform experiments with varying encoder hidden state size and ICNN hidden layer size. We perform 3 training runs, and the mean and standard deviation over the runs are reported in Figures~\ref{fig:ablation_hidden_size}-\ref{fig:ablation_ICNN_hidden_size}. We observe that the performance of our \MQF\ is steady when the hyperparameters are changing. We also see that the standard deviations are small relative to the mean. In particular, the magnitudes of the standard deviations are about 15\% of that of the means. When the encoder hidden state and ICNN hidden layer sizes increase, the performance gets slightly better in general.

\begin{figure}[ht]
    \centering
    \begin{subfigure}[b]{0.48\columnwidth}
        \centering
        \includegraphics[width=0.95\textwidth]{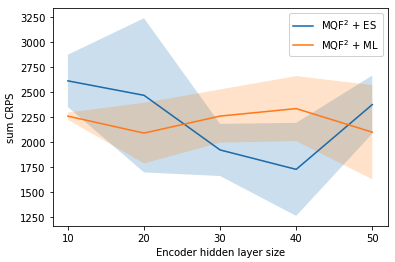}
        \caption{sum CRPS as we vary the hidden state size}
    \end{subfigure}
    \hfill
    \begin{subfigure}[b]{0.48\columnwidth}
        \centering
        \includegraphics[width=0.95\textwidth]{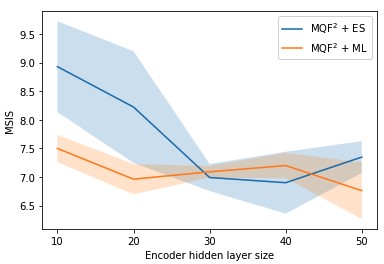}
        \caption{MSIS as we vary the hidden state size}
    \end{subfigure}
    \hfill
    \begin{subfigure}[b]{0.48\columnwidth}
        \centering
        \includegraphics[width=0.95\textwidth]{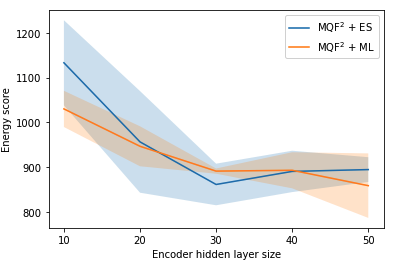}
        \caption{ES as we vary the hidden state size}
    \end{subfigure}
    \hfill
    \begin{subfigure}[b]{0.48\columnwidth}
        \centering
        \includegraphics[width=0.95\textwidth]{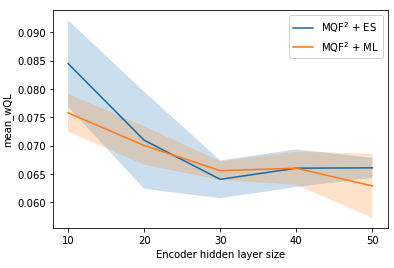}
        \caption{wQL as we vary the hidden state size}
    \end{subfigure}
\caption{To test the robustness of our \MQF, we investigate the influence of the size of the encoder's hidden state on its performance. We report the sum CRPS, MSIS, Energy Score and mean weighted quantile loss when different hidden state sizes are used. The experiments are repeated 3 times. The solid lines represent the mean of the results, and the colored regions represent the range of 1 standard deviation.}
\label{fig:ablation_hidden_size}
\end{figure}

\begin{figure}[ht]
    \centering
    \begin{subfigure}[b]{0.48\columnwidth}
        \centering
        \includegraphics[width=0.95\textwidth]{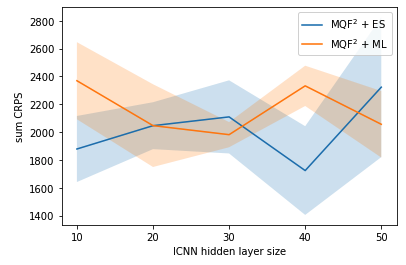}
        \caption{sum CRPS as we vary the hidden layer size}
    \end{subfigure}
    \hfill
    \begin{subfigure}[b]{0.48\columnwidth}
        \centering
        \includegraphics[width=0.95\textwidth]{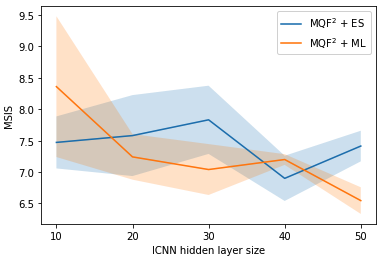}
        \caption{MSIS as we vary the hidden layer size}
    \end{subfigure}
    \hfill
    \begin{subfigure}[b]{0.48\columnwidth}
        \centering
        \includegraphics[width=0.95\textwidth]{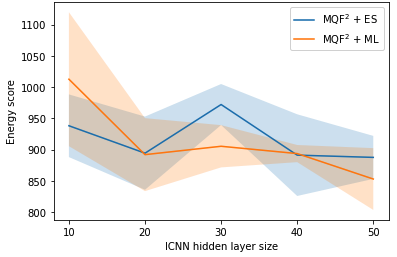}
        \caption{ES as we vary the hidden layer size}
    \end{subfigure}
    \hfill
    \begin{subfigure}[b]{0.48\columnwidth}
        \centering
        \includegraphics[width=0.95\textwidth]{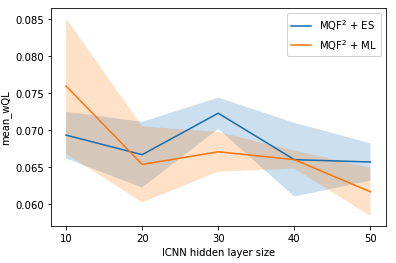}
        \caption{wQL as we vary the hidden layer size}
    \end{subfigure}
\caption{To test the robustness of our \MQF, we investigate the influence of the width of the ICNN on its performance. We report the sum CRPS, MSIS, Energy Score and mean weighted quantile loss when different hidden layer sizes are used. The experiments are repeated 3 times. The solid lines represent the mean of the results, and the colored regions represent the range of 1 standard deviation.}
\label{fig:ablation_ICNN_hidden_size}
\end{figure}

\clearpage
\section{SAMPLE PATH FIGURES}
\begin{figure}[ht]
    \centering
    \begin{subfigure}[b]{0.9\columnwidth}
        \centering
        \includegraphics[width=0.85\textwidth]{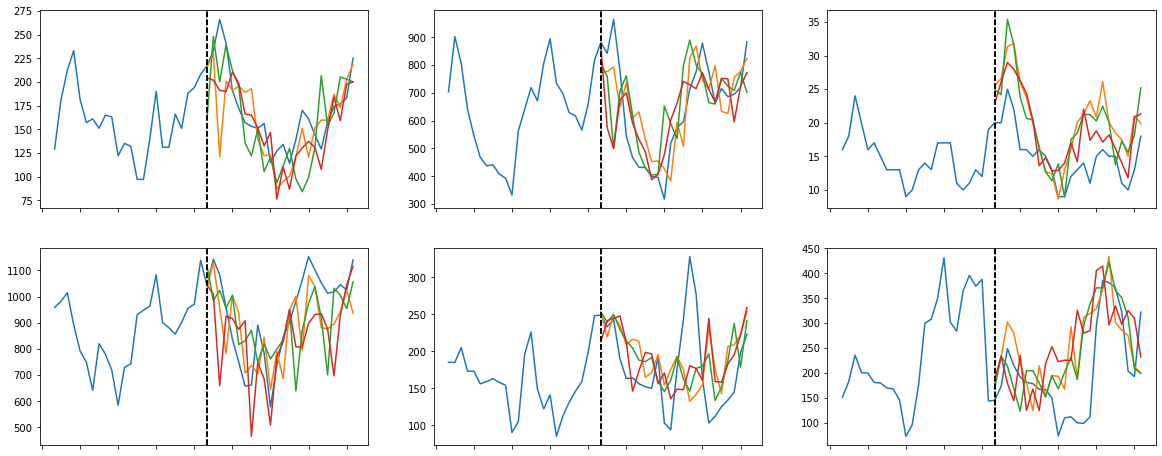}
        \caption{MQCNN}
    \end{subfigure}
    \hfill
    \begin{subfigure}[b]{0.9\columnwidth}
        \centering
        \includegraphics[width=0.85\textwidth]{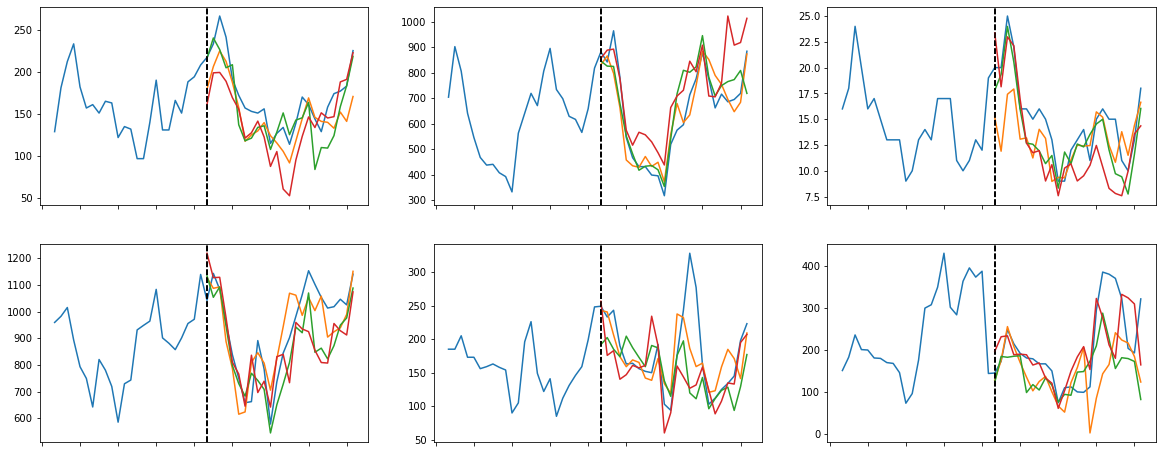}
        \caption{DeepAR}
    \end{subfigure}
    \hfill
    \begin{subfigure}[b]{0.9\columnwidth}
        \centering
        \includegraphics[width=0.85\textwidth]{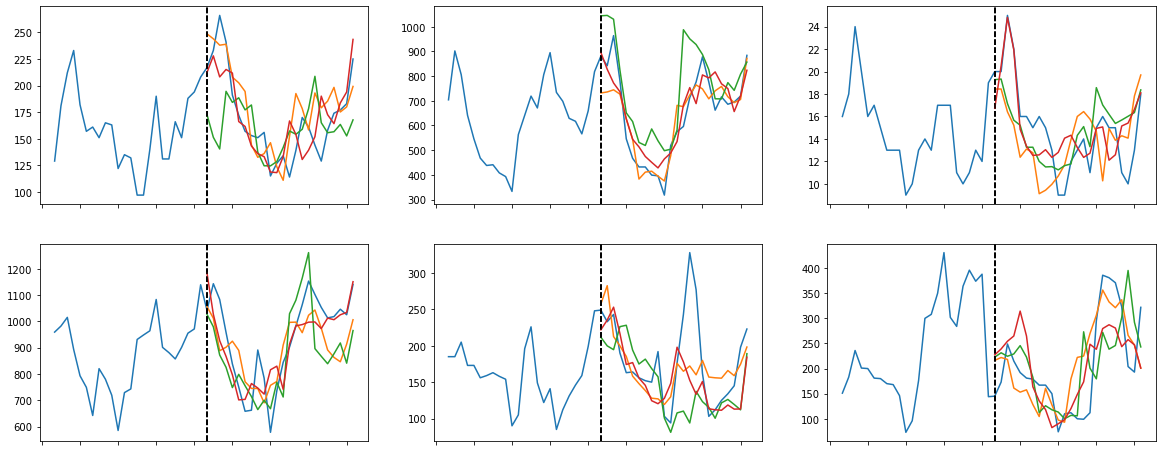}
        \caption{MQF$^2$}
    \end{subfigure}
\caption{Sample paths generated by MQCNN, DeepAR and MQF$^2$. Three sample paths are generated for each of the 6 time series of the \texttt{Elec} dataset. The dotted vertical lines represent the start of the prediction horizon.}
\label{fig:sample_path_extra}
\end{figure}

\vfill

\end{document}